\documentclass[11pt]{article}
\usepackage[utf8]{inputenc}
\usepackage[english]{babel}
\usepackage[margin=0.85in]{geometry}
\usepackage{amsmath,amssymb,amsfonts,amsthm}
\usepackage{graphicx,booktabs,multirow,array}
\usepackage{hyperref,xcolor,url}
\usepackage{algorithm,algpseudocode}
\usepackage{setspace}
\usepackage{enumitem}
\usepackage{subcaption}
\usepackage{natbib}
\usepackage{appendix}
\usepackage{tabularx}
\usepackage{placeins}
\usepackage{titlesec}
\usepackage{mathtools}

\titlespacing*{\section}{0pt}{1.3ex plus 0.6ex minus .2ex}{0.8ex plus .2ex}
\titlespacing*{\subsection}{0pt}{1.1ex plus 0.5ex minus .2ex}{0.6ex plus .2ex}
\titlespacing*{\subsubsection}{0pt}{0.9ex plus 0.4ex minus .2ex}{0.5ex plus .2ex}
\theoremstyle{definition}
\newtheorem{definition}{Definition}[section]
\newtheorem{theorem}{Theorem}[section]

\newtheorem{proposition}[theorem]{Proposition}
\newtheorem{corollary}[theorem]{Corollary}

\theoremstyle{remark}
\newtheorem{remark}{Remark}[section]
\newtheorem{assumption}{Assumption}[section]

\hypersetup{
    colorlinks=true,
    linkcolor=blue!70!black,
    citecolor=blue!70!black,
    urlcolor=blue!70!black
}

\newcommand{\II}{\mathcal{I}}
\newcommand{\Bmax}{B_{\max}}
\newcommand{\Dtr}{\mathcal{D}^{\mathrm{tr}}}
\newcommand{\Dtst}{\mathcal{D}^{\mathrm{tst}}}
\newcommand{\IG}{\mathrm{IG}}
\newcommand{\EU}{\mathrm{EU}}
\newcommand{\R}{\mathbb{R}}

\newcommand{\VC}{\mathrm{VC}}

\newcommand{\IAA}{IAIML-A}
\newcommand{\IAR}{IAIML-R}

\title{\Large \textbf{Complexity-Budgeted, Interaction-Aware \\Interpretable Model for Tabular Data}}

\author{
	Srikumar Krishnamoorthy\\
	Information Systems Area, Indian Institute of Management Ahmedabad, India
}

\date{}

\begin{document}

\maketitle

\begin{abstract}
Inherently interpretable classifiers for tabular data typically rely on sparse features, rules, or patterns that users can inspect directly. The marginal feature-screening step common to these methods can discard variables whose predictive value emerges only through joint configurations with other variables. We present Interaction Aware Interpretable Machine Learning (IAIML), a framework that addresses this limitation through three coordinated mechanisms: adaptive per-feature discretization, finite-grid pairwise interaction scoring, and a partitioned explanation budget. Detected interactions are routed through one of two strategies: relaxing the screening filter so that interaction-supported variables enter the pattern search, or constructing explicit pair terms for a sparse downstream classifier. On a 40-dataset panel comprising 24 real-world tabular benchmarks and 16 synthetic interaction stress tests, evaluated under nested cross-validation, IAIML achieves mean AUC within 1.4 points of tuned gradient-boosted ensembles while requiring roughly 14--28 times fewer fitted explanation components. On datasets with strong pairwise interaction structure and low marginal signal, IAIML outperforms all baselines. Among compact interpretable methods, IAIML is comparable to RuleFit in AUC and component count and is less expensive to tune. EBM obtains a small but significant AUC advantage across the full panel, with a substantially larger lookup-table footprint. Performance degrades on datasets requiring higher-order interactions beyond the pairwise scope. Component-isolated ablations confirm that adaptive discretization and interaction-aware admission each contribute incrementally. These results support IAIML as a compact, interaction-aware framework appropriate for settings where bounded explanation size and controlled treatment of feature interactions are design requirements.

\vspace{0.5em}
\noindent\textbf{Keywords:} Interpretable machine learning, Feature interaction detection, Pattern mining, Classification, Complexity budgeting
\end{abstract}

\FloatBarrier
\section{Introduction}
\label{sec:intro}

Machine learning is increasingly used to support important decisions in healthcare,
finance, criminal justice, and public policy. In these settings, stakeholders require
that automated decisions be explainable, auditable, and contestable
\citep{Rudin2019, Guidotti2019, DeBock2024}. This requirement has motivated a growing
body of work on \emph{inherently interpretable} models, which provide transparency by
construction, as opposed to post-hoc explanation methods that approximate opaque models
after the fact \citep{Ribeiro2016, Lundberg2017}. Interpretable model families include
decision trees \citep{Breiman1984}, rule lists \citep{Letham2015, Angelino2018},
scoring systems \citep{Ustun2016}, generalized additive models
\citep{Lou2012, Lou2013}, and pattern-based classifiers \citep{Krishnamoorthy2024}.
A common thread across these families is sparsity: the model's prediction is traceable
to a small number of features, rules, or patterns that a human reviewer can inspect
directly.

\par Achieving sparsity, however, typically requires an early screening step that ranks
candidate features by their individual association with the target variable. This
marginal screening is a natural choice for auditability and computational control,
but it introduces a systematic vulnerability. Two variables may each be individually
uninformative while their joint configuration is highly predictive of the outcome.
Consider a clinical setting in which a patient's risk depends on the combination of
two biomarkers, neither of which is predictive on its own. A screening rule that ranks
features by marginal association will discard both biomarkers before their joint signal
can be observed. This problem is not limited to rare edge cases. Feature interactions
are well documented in genomics, clinical risk prediction, and credit scoring
\citep{Dumitrescu2022}, and their suppression can
degrade both accuracy and the fidelity of the resulting explanations.

\par The vulnerability is especially direct in pattern-mining methods that assign item
utilities from marginal association before evaluating joint patterns. In such
methods, a feature with negligible marginal signal receives low utility regardless
of how informative it becomes when combined with a partner. Other interpretable
families handle interactions differently. Tree-derived rules can encode split-based
interactions, but only those that the greedy splitting heuristic happens to explore.
Explainable Boosting Machines (EBM) add selected pairwise terms through
residual-based boosting \citep{Lou2013}, but residual-based pair selection can miss
pairs whose constituent features contribute negligible main effects. RuleFit
\citep{Friedman2008} inherits interaction structure from a tree ensemble, but provides
no mechanism to distinguish rules that capture genuine synergy from those that capture
redundant marginal signal. In each case, interaction detection is either absent,
implicit, or coupled to marginal signal strength.

\par This paper introduces Interaction Aware Interpretable Machine Learning (IAIML), a
pattern-classification framework that detects pairwise synergy independently of
marginal signal strength and channels it into a sparse, budget-constrained classifier.
IAIML builds on the HUG-IML pattern-mining method \citep{Krishnamoorthy2024} and
addresses a specific gap: the suppression of interaction-only
variables by marginal-utility-based candidate admission. The framework scores pairwise
interactions on a coarse discretization grid, then routes detected synergy through one
of two strategies. The first relaxes the screening filter so that interaction-supported
variables can enter the pattern search. The second constructs explicit algebraic pair
terms and supplies them to the downstream classifier. A partitioned explanation budget
caps the total number of auditable components, ensuring that the added representational
capacity does not compromise inspectability. Every prediction remains traceable to a
bounded set of patterns, original features, and, where applicable, named pair terms.

\par This paper makes the following three contributions:

\begin{enumerate}
\item Presents an information-theoretic interaction criterion for marginal-signal-independent synergy detection. The criterion quantifies pairwise excess joint information from discretized contingency tables independently of marginal signal strength. Finite-grid concentration and ranking-consistency guarantees establish when the resulting source selection can be trusted at a given sample size.

\item Introduces IAIML, a three-stage framework that operationalizes the information-theoretic interaction criterion. The framework combines adaptive per-feature discretization, two separable routing strategies for channeling detected synergy into the explanation (\IAR{} for pattern-level admission, \IAA{} for named algebraic pair terms), and a partitioned complexity budget that caps the total number of auditable explanation components.

\item Empirically evaluates the framework under a shared nested cross-validation protocol. On a 40-dataset panel (24~real-world benchmarks, 16~synthetic interaction stress tests), IAIML achieves mean AUC within 1.4 points of tuned gradient-boosted ensembles while requiring roughly 14--28 times fewer fitted explanation components. On a subset of datasets with strong pairwise interaction structure and low marginal signal, IAIML outperforms all baselines. Component-isolated ablations confirm that each design element contributes incrementally.
\end{enumerate}

The remainder of the paper is organized as follows. Section~\ref{sec:related} reviews
related work. Section~\ref{sec:prelim} introduces information-theoretic preliminaries
and the pattern-mining formalism. Section~\ref{sec:methodology} describes the proposed
three-stage framework. Section~\ref{sec:theory} develops the theoretical motivation
and complexity analysis. Section~\ref{sec:experiments} presents the empirical study and shares the key findings.
Section~\ref{sec:discussion} discusses implications and limitations, and
Section~\ref{sec:conclusion} presents concluding remarks.

\FloatBarrier
\section{Related Work}
\label{sec:related}

\emph{Tree-based and rule-based methods.} Decision trees \citep{Breiman1984, Bertsimas2017} provide direct rule-based transparency and can implicitly capture interactions through recursive partitioning. Their interpretability and computational efficiency make them a natural first choice in many applied settings. The interactions that emerge are determined by the greedy splitting heuristic, which favors features with strong marginal signal. Rule list and rule set methods \citep{Letham2015, Angelino2018, Lakkaraju2016} offer compact Boolean decision logic optimized for auditability; CORELS \citep{Angelino2018} provides certifiably optimal rule lists, a guarantee no other method in this family offers, though it requires pre-binarized features. Scoring systems \citep{Ustun2016} produce integer point tables that are directly usable in clinical and regulatory settings. RuleFit \citep{Friedman2008} bridges ensemble learning and interpretability by extracting conjunctive rules from a tree ensemble and fitting a LASSO-regularized linear model over rules and original features. This architecture is effective at capturing interaction structure inherited from tree splits, though the interaction coverage is governed by the tree-growing heuristic rather than a targeted synergy criterion.

\emph{Additive and post-hoc methods.} Generalized additive models provide visualizable per-feature shape functions. Explainable Boosting Machines (EBM) \citep{Lou2012, Lou2013, Nori2019, Chang2021} represent the current state of the art among inherently interpretable methods, fitting pairwise interaction terms through greedy cyclic boosting guided by the FAST algorithm \citep{Lou2013}. FAST selects pairs by residual-error reduction after main effects are fit, a strategy that is effective when interacting features also carry moderate marginal signal. A structural consequence of residual-based selection is that pairs whose constituent features carry negligible marginal signal contribute little residual to explain and may therefore receive lower priority, even when their joint configuration is informative. Neural Additive Models \citep{Agarwal2021} and TabNet \citep{Arik2021} extend the additive paradigm with flexible or attention-based alternatives, trading some global transparency for capacity. Sparse linear models with engineered features \citep{Gosiewska2021, Dumitrescu2022, Liu2024, Chen2024ejor} augment logistic regression with nonlinear transforms, an approach that has shown promise in credit scoring and other regulated domains. Post-hoc explanation methods such as LIME \citep{Ribeiro2016} and SHAP \citep{Lundberg2017} provide local approximations of opaque models; their role is complementary to inherently interpretable models, which build transparency into the model itself \citep{Rudin2019, Borgonovo2024, DeBock2024}.

\emph{Pattern mining and Interaction detection.} High Utility Itemset mining \citep{Wu2015, FournierViger2014, Zida2017, Krishnamoorthy2017, Tseng2016} identifies itemsets whose aggregate utility exceeds a threshold. The HUG-IML framework \citep{Krishnamoorthy2024} adapts this machinery for supervised classification via feature--target correlation utilities and information-gain filtering. IAIML uses HUG-IML-style high-utility gain pattern mining as the pattern-generation backbone, but changes the upstream candidate-admission mechanism and the downstream representation budget. The original HUG-IML formulation assigns feature utilities from marginal feature--target association and filters mined patterns by information gain. IAIML keeps the interpretable pattern vocabulary but adds adaptive discretization, explicit pairwise interaction scoring, and one of two controlled pathways for allowing low-marginal but interaction-supported variables to enter the final sparse classifier. Section~\ref{sec:theory} shows formally why HUG-IML's marginal-utility weighting cannot detect interaction-only patterns.

Feature interaction detection more broadly draws on ANOVA-based tree diagnostics \citep{Hooker2007}, Friedman's H-statistic \citep{Friedman2008}, Shapley interaction indices \citep{Grabisch1999, Sundararajan2020, Borgonovo2024}, and Partial Information Decomposition (PID) \citep{Williams2010, Griffith2014, Bertschinger2014}.
The interaction score used by IAIML is a simplified, grid-based measure of net interaction rather than a full PID estimate. In PID terms, the score compares how much a pair of features tells us about the label when considered jointly versus the sum of what each feature tells us on its own. This comparison blends two different effects together: synergy (information that only emerges when the features are considered together) and redundancy (information that both features already share individually), without separating them. A positive score simply means the pair, taken together, carries more information about the label than the two features do separately. This makes the score useful as a quick screening tool for spotting pairs of features whose joint behavior matters even when neither feature looks informative on its own. 

Discretization of continuous features, meaning the conversion of them into discrete bins or categories, is a classical preprocessing step \citep{Fayyad1993, Dougherty1995, Kerber1992, Tsai2008, Garcia2013}. Most existing methods choose how to bin each feature by optimizing a criterion for that feature alone, without regard to other features. IAIML builds on this same general approach but extends it by also taking feature interactions into account when deciding how each feature should be discretized.

Table~\ref{tab:landscape} summarizes the related literature. The limitation IAIML addresses is most direct for pattern-mining frameworks that use marginal feature--target utilities before evaluating candidate itemsets. Related interpretable families such as EBM and RuleFit can represent pairwise structure, but they obtain it through residual-based boosting or tree-derived rule extraction rather than through an explicit marginal-signal-independent synergy score tied to a bounded pattern vocabulary. IAIML combines such a criterion with an inherently interpretable downstream classifier. Section~\ref{sec:experiments} evaluates it directly against representative baselines: tuned gradient-boosted ensembles as strong predictive baselines, EBM as the leading GAM-with-interactions method, and RuleFit as the leading tree-derived rule-ensemble method.

\begin{table}[t]
\centering
\caption{Summary of interpretable model families with respect to interaction handling.}
\label{tab:landscape}\setlength{\tabcolsep}{3pt}
\small
\begin{tabular}{lcccc}
\toprule
\textbf{Method family} & \textbf{Inherent} & \textbf{Explicit interaction} & \textbf{Marginal-independent} & \textbf{Complexity-} \\
 & \textbf{transparency?} & \textbf{criterion?} & \textbf{detection?} & \textbf{bounded?} \\
\midrule
Trees / rule lists & Yes & No & No & Partially \\
Scoring systems & Yes & No & No & Yes \\
RuleFit & Yes & No (implicit via trees) & No & Yes \\
EBM & Yes & Yes (FAST residual) & No & No \\
Post-hoc (SHAP, LIME) & No & No & N/A & N/A \\
IAIML (this paper) & Yes & Yes & Yes & Yes \\
\bottomrule
\end{tabular}
\end{table}

\FloatBarrier
\section{Preliminaries}
\label{sec:prelim}

Let $\mathcal{D} = \{(\mathbf{x}^{(i)}, y^{(i)})\}_{i=1}^n$ denote a labeled dataset with $p$ predictors $\mathbf{x}^{(i)} \in \R^p$ and binary\footnote{Generalization to multiclass settings is straightforward; we focus on binary classification for clarity of exposition.} labels $y^{(i)} \in \{0,1\}$. We write $f_j$ for the $j$-th feature and $x_j^{(i)}$ for its value in the $i$-th instance. Let $\Dtr$ and $\Dtst$ denote training and test splits obtained by stratified partitioning.

$I(X;Y)$ denotes the mutual information between random variables $X$ and $Y$, estimated from empirical distributions on discretized contingency tables:
\begin{equation}
\label{eq:mi}
\hat{I}(X;Y) = \sum_{x \in \mathcal{X}} \sum_{y \in \mathcal{Y}} \hat{p}(x,y) \log \frac{\hat{p}(x,y)}{\hat{p}(x)\hat{p}(y)},
\end{equation}
where $\hat{p}$ denotes the empirical distribution. We use natural logarithms (nats); results hold for any logarithmic base up to a constant.

The \emph{information gain} of a discretized feature $f_j$ (with $B_j$ bins) with respect to the target is $\IG(f_j) \equiv I(\mathrm{bin}(f_j, B_j);\, y)$, i.e., the mutual information between the binned feature and the label.

\begin{definition}[Interaction Information]
\label{def:ii}
For discretized features $f_i, f_j$ and target $y$, the \emph{interaction information} is
\begin{equation}
\label{eq:ii}
\II(f_i, f_j) \;=\; I(f_i, f_j;\, y) \;-\; I(f_i;\, y) \;-\; I(f_j;\, y).
\end{equation}
When $\II > 0$, the pair exhibits \emph{synergy}: joint information exceeds the sum of marginal contributions. When $\II < 0$, the pair is \emph{redundant}.
\end{definition}

\begin{definition}[Interaction-only Signal]
	\label{def:ios}
	A feature pair $(f_i, f_j)$ exhibits an \emph{interaction-only signal} with respect to target $y$ if there exist thresholds $0 < \epsilon_1 \ll \epsilon_2$ such that:
	\begin{enumerate}[label=(\alph*),nosep]
		\item $I(f_i; y) < \epsilon_1$ and $I(f_j; y) < \epsilon_1$ \quad (weak marginal signal),
		\item $I(f_i, f_j; y) > \epsilon_2$ \quad (strong joint signal),
	\end{enumerate}
	where $\epsilon_1$ is small enough that neither feature would survive a marginal relevance screen and $\epsilon_2$ is large enough to be practically interesting.
	This implies $\II(f_i, f_j) > \epsilon_2 - 2\epsilon_1 \approx \epsilon_2$.
\end{definition}

\begin{definition}[High Utility Gain Pattern]
\label{def:hug}
A HUG pattern is an itemset $X$ in the top-$k$ high utility itemsets (constrained to maximum length $L$) that satisfies the information gain threshold: $\mathrm{HUG} = \{X \mid X \in \mathrm{topkHUI}_L \;\text{and}\; \IG(X) \geq G\}$, where the external utility of feature $f_j$ for class $c$ is defined as $\EU(f_j, c) = |\mathrm{corr}(f_j, y)| \cdot w_c$ and $w_c$ is a class weight.
\end{definition}

While we build upon HUG-IML framework \citep{Krishnamoorthy2024}, our contributions apply more broadly to any framework that screens features by marginal association. 

\begin{definition}[Pair Transform Family]
\label{def:aug}
For a selected feature pair $(f_i, f_j)$, the \emph{pair transform family} $\Phi_{ij}$ consists of four operators applied to the \emph{original continuous} feature values:
\begin{align}
\phi^{\mathrm{prod}}_{ij} &= f_i \cdot f_j, &
\phi^{\mathrm{abs}}_{ij} &= |f_i - f_j|, \label{eq:aug1}\\
\phi^{\mathrm{sign}}_{ij} &= f_i - f_j, &
\phi^{\mathrm{sum}}_{ij} &= f_i + f_j. \label{eq:aug2}
\end{align}
These transforms render specific interaction structures, namely multiplicative, distance-based, directional, and additive, accessible to a downstream linear classifier.
\end{definition}

\FloatBarrier
\section{Methodology}
\label{sec:methodology}

The proposed framework addresses the interaction gap in interpretable classification through a coordinated sequence of three major stages. Figure~\ref{fig:block} illustrates the overall architecture. The first stage, \emph{adaptive discretization} (Section~\ref{sec:adaptive}), replaces fixed global binning with a supervised per-feature bin selection that respects each feature's distributional complexity. The second stage, \emph{interaction detection and feature construction} (Section~\ref{sec:interaction}), applies an information-theoretic synergy criterion to identify feature pairs whose joint predictive value exceeds their marginal contributions and channels this evidence through one of two separable routing pathways. The third stage, \emph{complexity-budgeted learning} (Section~\ref{sec:budget}), caps the total number of explanation components via a partitioned budget and trains a sparse logistic regression classifier within this budget. The key design principle throughout is \emph{separable interpretability}: each additional representational capacity enters the model through a single, semantically transparent mechanism, so that every prediction can be unambiguously attributed to its source components.

\par IAIML first learns adaptive discretization rules using only the training fold. It then computes coarse-grid pairwise interaction scores on the same training fold and records the top source--partner pairs. \IAR{} uses these pairs only to relax candidate admission for pattern mining; \IAA{} instead converts them into named pair transforms and appends them to the sparse classifier vocabulary. In both pathways, the final classifier is fit after applying the partitioned component budget. All learned bins, selected pairs, mined patterns, and coefficients are then applied to the held-out outer fold.

\begin{figure}[t]
\centering
\includegraphics[width=0.6\textwidth,height=0.55\textheight]{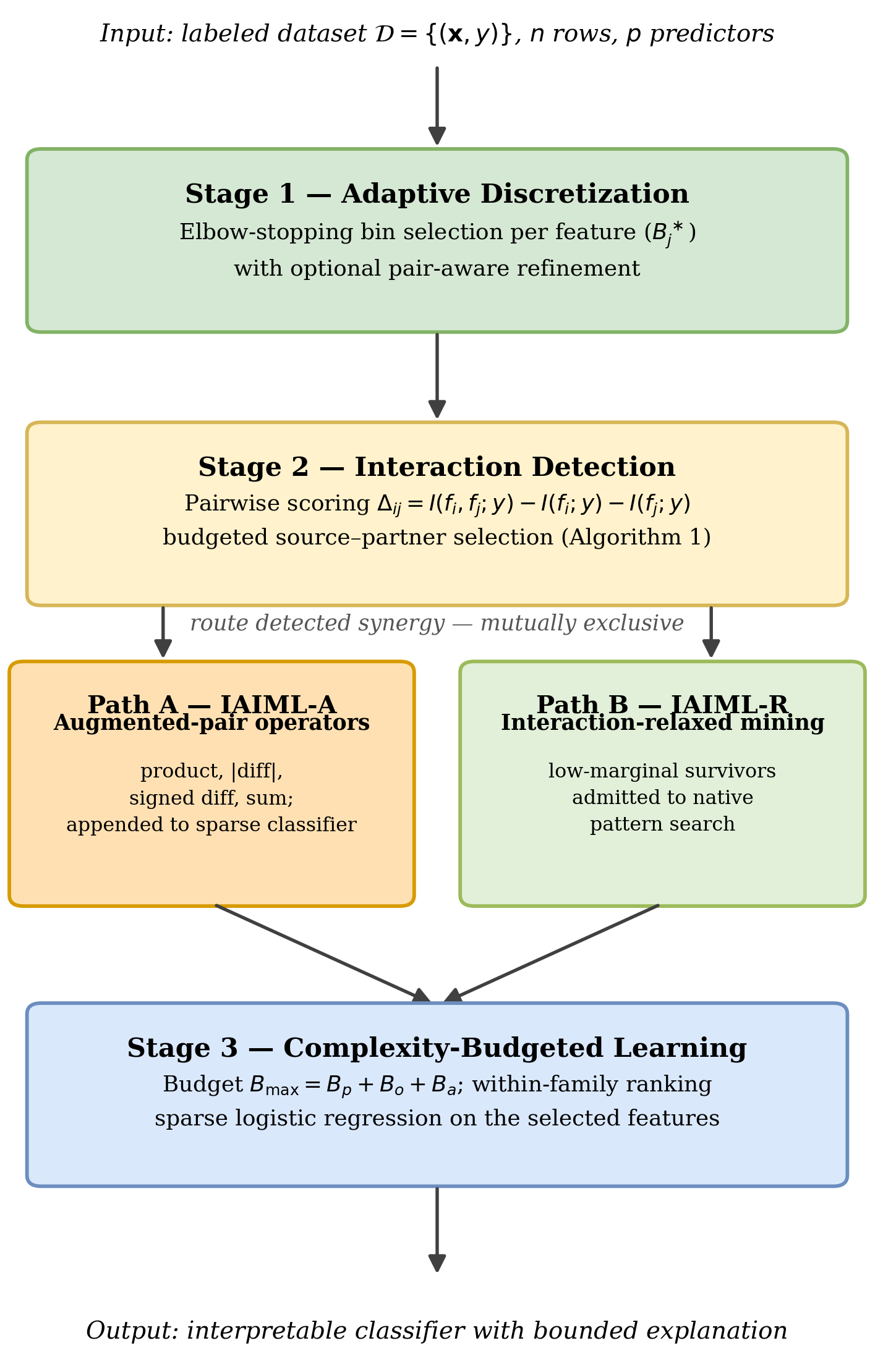}
\caption{High-level architecture of the interaction-aware interpretable classification framework. Stage 1 (green) performs adaptive per-feature discretization. Stage 2 (yellow) scores pairwise interaction information and selects source--partner pairs (Algorithm~\ref{alg:source}). The framework then routes detected synergy through exactly one of two mutually exclusive pathways, augmented-pair operators (orange, \IAA{}, Path A) or interaction-relaxed mining (light green, \IAR{}, Path B). Stage 3 (blue) applies a partitioned complexity budget and fits a sparse logistic regression classifier.}
\label{fig:block}
\end{figure}

\subsection{Adaptive Per-Feature Discretization}
\label{sec:adaptive}

Pattern-based interpretable classifiers that operate on discretized features and more broadly any itemset-mining or rule-induction framework that converts continuous predictors to categorical items, typically use equal-frequency binning with a global bin count $B$ shared across all numerical features. This is a deliberate interpretability choice: all mined items are stated in the same type of discretized vocabulary. It is also a source of avoidable error. A fixed global $B$ makes the explanation vocabulary easy to describe, but it asks every feature to express its class relationship at the same resolution. This creates two systematic failure modes.
\emph{Under-resolution} occurs when features with complex or multi-modal conditional distributions $p(f_j \mid y)$ are discretized too coarsely, merging distinct distributional regimes into single bins and destroying class-discriminative structure. \emph{Over-fragmentation} occurs when features with simple distributions or few distinct values are discretized too finely, producing bins with few observations and inflating estimation variance in downstream information-gain computations. The standard remedy, grid search over a small set of candidate values $B \in \{5, 7, 10, 15\}$, selects the best single global value but cannot resolve per-feature mismatches. A feature whose conditional distribution is bimodal may require $B = 10$ to separate the modes, while a near-binary feature needs only $B = 3$.

\subsubsection{Elbow-Point Information Gain Search}
We select $B_j$ per feature~$j$ via a supervised search over a candidate set $\mathcal{B} = \{B^{(1)} < B^{(2)} < \cdots < B^{(C)}\}$ (default $\{2,3,5,7,10,15\}$) with an elbow-stopping rule.

\begin{proposition}[Monotonicity of Training-Sample IG]
	\label{prop:mono}
	Let $\hat{P}_n$ denote the empirical distribution of $(f_j, y)$
	over $n$ training samples. If the partition $\pi_{B'}$ induced by $B'$ equal-frequency bins refines $\pi_B$ for $B' > B$, then $\IG(y;\,\mathrm{bin}(f_j, B')) \geq \IG(y;\,\mathrm{bin}(f_j, B))$, where both quantities are computed under $\hat{P}_n$.
\end{proposition}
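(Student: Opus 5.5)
Since $\pi_{B'}$ refines $\pi_B$, there is a deterministic map $g$ sending each cell of $\pi_{B'}$ to the unique cell of $\pi_B$ that contains it. Under the empirical distribution $\hat{P}_n$ we therefore have $\mathrm{bin}(f_j,B) = g(\mathrm{bin}(f_j,B'))$ for every training sample. The claim then reduces to the data-processing inequality for mutual information, applied to the finite joint distribution $\hat{P}_n$. Because $\hat{P}_n$ is a genuine probability distribution, every information-theoretic identity holds for the plug-in quantities exactly, with no asymptotic approximation. This is why the proposition can be stated for training-sample IG rather than population IG.

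The steps are as follows. Write $Z' = \mathrm{bin}(f_j,B')$ and $Z = g(Z')$. By the chain rule for mutual information under $\hat{P}_n$,
\[
\hat I(y; Z', Z) = \hat I(y; Z) + \hat I(y; Z' \mid Z) = \hat I(y; Z') + \hat I(y; Z \mid Z').
\]
Since $Z$ is a function of $Z'$, the term $\hat I(y; Z \mid Z') = 0$. Since conditional mutual information is nonnegative, $\hat I(y; Z' \mid Z)\ge 0$. Together these give $\hat I(y;Z') \ge \hat I(y;Z)$, which is the claim because $\IG$ is symmetric in its arguments.

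As a cross-check, the same inequality follows from the grouping property: the gain decomposes as $\sum_{c\in\pi_B}\hat p(c)\,\hat I(y; Z'\mid Z=c)$. Each summand is a KL divergence and hence nonnegative.

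The only real obstacle is the refinement hypothesis itself. The algebra is routine; what must be handled is the meaning of ``$\mathrm{bin}(f_j,B)$'' under ties and empirical quantiles. Equal-frequency partitions for different $B$ need not be nested in general; for example, the quantile cut points for $B=3$ and $B=2$ are not nested. The statement sidesteps this by assuming nestedness, so I would make explicit that cells are defined as sets of training points, with ties assigned consistently, so that $g$ is well defined pointwise on the sample. I would also note the two cases in which equality holds: when the refinement splits no class-informative cell, and degenerate cases such as empty bins, which contribute zero weight. I would add a remark that without the refinement assumption monotonicity can fail, which motivates the elbow rule rather than a pure maximization.
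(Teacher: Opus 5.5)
Your proposal is correct and matches the paper's proof: refinement gives a surjection $g$ with $\mathrm{bin}(f_j,B)=g(\mathrm{bin}(f_j,B'))$, and the data-processing inequality under $\hat{P}_n$ gives the result. The paper simply cites the data-processing inequality for the Markov chain $y \to \mathrm{bin}(f_j,B') \to \mathrm{bin}(f_j,B)$, whereas you prove it via the chain rule, showing the gap equals $\hat I(y; Z' \mid Z) \ge 0$; this is the standard proof of that inequality.
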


\textit{Proof.}
Refinement implies $\mathrm{bin}(f_j,B) = g(\mathrm{bin}(f_j,B'))$ for some surjection~$g$, so $y \to \mathrm{bin}(f_j,B') \to \mathrm{bin}(f_j,B)$ is a Markov chain and the data-processing inequality gives the result. \qed

For arbitrary candidate counts the quantile cutpoints of $\pi_B$ and $\pi_{B'}$ need not coincide, so strict nesting does not hold in general. In practice the IG sequence is nevertheless observed to be non-decreasing on training data: quantile bins become approximately nested for continuous features with large~$n$, and any minor non-monotonicity simply triggers early stopping rather than selecting a suboptimal bin count. Exact nesting can be enforced by choosing $B^{(c)} \mid B^{(c+1)}$ (e.g.\ $\{2,4,8,16,32\}$).

On held-out data, however, high bin counts may \emph{decrease} IG due to overfitting. The elbow-stopping rule resolves this without requiring a held-out set.

\begin{definition}[Elbow-Stopping Rule]
	\label{def:elbow}
	Given the candidate IG sequence $\{\IG_j(B^{(c)})\}_{c=1}^{C}$, marginal gain ratio threshold $\rho \in (0,1)$, and stabilisation constant $\epsilon_0 = 10^{-9}$, define the \emph{first-violation index}
	\begin{equation}
		\label{eq:tau}
		\tau_j \;\coloneqq\; \min\!\left\{\, c \in \{2,\ldots,C\} \;:\; \frac{\IG_j(B^{(c)}) - \IG_j(B^{(c-1)})}{\IG_j(B^{(c-1)}) + \epsilon_0} \;<\; \rho \;\right\},
	\end{equation}
	with $\tau_j \coloneqq C + 1$ if no such $c$ exists. The selected bin count is
	\begin{equation}
		\label{eq:elbow}
		B_j^* \;=\; B^{(\tau_j - 1)}.
	\end{equation}
\end{definition}

Since $\tau_j \geq 2$ by construction, $B_j^* \geq B^{(1)}$: the coarsest candidate is always admissible, providing a natural fallback for uninformative features. When $\IG_j(B^{(1)}) = 0$, the ratio at $c = 2$ reduces to $\IG_j(B^{(2)})/\epsilon_0$, which exceeds $\rho$ for any feature with non-trivial class separation at the second candidate.

\begin{proposition}[Properties of Elbow-Stopping]
	\label{prop:elbow}
	\begin{enumerate}[label=(\alph*),nosep]
		\item $B_j^* \leq B^{(C)}$, bounding the cardinality of the discretised feature.
		\item If $\IG_j(\cdot)$ is concave over $\mathcal{B}$ and the
		candidates are equally spaced, the marginal gain ratio is
		non-increasing, so elbow-stopping selects a bin count in
		the knee region of the IG curve.
		\item $B_j^* \leq \min(B^{(C)},\, d_j)$ where $d_j \coloneqq |\{x_j^{(i)}\}_i|$: for $B > d_j$ the implementation collapses duplicate quantile edges, the effective partition is unchanged, the marginal gain drops to zero, and the rule stops.
	\end{enumerate}
\end{proposition}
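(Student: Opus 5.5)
We take the three parts in turn, each directly from Definition~\ref{def:elbow}; none needs more than the definition of $\tau_j$, Proposition~\ref{prop:mono}, and an elementary ratio computation.

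For (a), by construction $\tau_j \in \{2,\ldots,C+1\}$: the minimum in \eqref{eq:tau} ranges over $c \le C$, and the fallback value is $C+1$. Hence $\tau_j - 1 \in \{1,\ldots,C\}$. Since the candidate set $\mathcal{B}$ is strictly increasing, $B_j^* = B^{(\tau_j-1)} \le B^{(C)}$. The discretized feature takes at most $B_j^*$ values, which gives the cardinality bound.

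For (b), we set $g_c = \IG_j(B^{(c)})$ and write the ratio as $r_c = (g_c - g_{c-1})/(g_{c-1}+\epsilon_0)$.
\begin{itemize}
\item With equally spaced candidates, concavity of $\IG_j$ over $\mathcal{B}$ means the increments $\Delta_c = g_c - g_{c-1}$ are non-increasing in $c$.
\item By Proposition~\ref{prop:mono}, applied in the nested case or under the observed monotonicity the text adopts, the increments are non-negative. So $g_{c-1}$ is non-decreasing and the denominators $g_{c-1}+\epsilon_0 > 0$ are non-decreasing.
\item A non-increasing non-negative numerator over a non-decreasing positive denominator gives $r_{c+1} \le r_c$.
\end{itemize}
Consequently $\{c : r_c < \rho\}$ is an upper interval $\{\tau_j,\ldots,C\}$. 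The rule therefore selects the last candidate before the ratio first falls below $\rho$, i.e.\ the point after which every further refinement yields relative gain below $\rho$. We interpret this as the knee region and state it as the formal content of "knee". The only care point is the sign assumption on increments; we make it explicit, since concavity alone does not give non-negativity.

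For (c), the main obstacle is that it rests on implementation behaviour, so we formalize it as an assumption. When equal-frequency cutpoints are computed on a feature with $d_j$ distinct values, duplicate quantile edges are merged. For any $B \ge d_j$ the effective partition is then the partition into the $d_j$ singletons, or a fixed partition independent of $B$.

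Take the first candidate $B^{(c)} > d_j$ with $c \ge 2$. Then $B^{(c-1)}$ either already induces that same partition, or is the largest candidate $\le d_j$. In the first case $g_c = g_{c-1}$, so $r_c = 0 < \rho$, forcing $\tau_j \le c$ and $B_j^* \le B^{(c-1)} \le d_j$. In the second case $B^{(c-1)} \le d_j$ directly. If $\tau_j$ occurs even earlier, the bound only improves.

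The remaining edge case is $B^{(1)} > d_j$. Here $B_j^*$ may equal $B^{(1)}$, but the effective number of bins is $\le d_j$. We therefore state the bound for the effective cardinality, which is the intended reading. Combining with (a) gives $\min(B^{(C)}, d_j)$.
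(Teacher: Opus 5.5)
Your (a) is the paper's own reasoning ($\tau_j\le C+1$, so $\tau_j-1\le C$). Your (b) is a valid formalization of a claim the paper only asserts. The problem is (c).

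\textbf{The gap in (c).} After fixing the first candidate $B^{(c)}>d_j$, your two cases are not a real dichotomy. The inequality $B^{(c-1)}\le d_j$ holds in both, by the choice of $c$, and it bounds $B^{(c-1)}$, not $B_j^*$. To conclude $B_j^*\le d_j$ you need $\tau_j\le c$, i.e.\ $r_c<\rho$, and in your second case nothing forces this. There $r_c$ compares two different partitions and may exceed $\rho$, while your stabilization assumption only gives $r_{c+1}=0$. The rule can then stop at $\tau_j=c+1$ and return $B_j^*=B^{(c)}>d_j$.

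Under the singleton version of your assumption, the second case is in fact forced whenever $d_j\notin\mathcal{B}$: at most $B^{(c-1)}<d_j$ bins cannot yield $d_j$ singletons. A concrete instance consistent with your assumption:
\begin{itemize}
\item candidates $\mathcal{B}=\{2,3,5,7,\ldots\}$;
\item four equiprobable values $v_1<v_2<v_3<v_4$ (so $d_j=4$) with $P(y=1\mid v_k)=0.9,\,0.1,\,0.7,\,0.3$;
\item partitions $\{v_1,v_2\},\{v_3,v_4\}$ at $B=2$, then $\{v_1,v_2\},\{v_3\},\{v_4\}$ at $B=3$, then singletons for all $B\ge 4$.
\end{itemize}
The IG values at $B=2,3,5,7$ are about $0,\ 0.041,\ 0.225,\ 0.225$ nats. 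Hence $r_2\gg\rho$, $r_3\approx 4.5>\rho$ for every $\rho\in(0,1)$, and $r_4=0$. So $\tau_j=4$ and $B_j^*=5>d_j$.

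\textbf{What actually holds.} The nominal bound does not follow. What your assumption does yield is $B_j^*\le\min\{B\in\mathcal{B}:B\ge d_j\}$, because the ratio vanishes at the candidate after the first one that is $\ge d_j$. The paper's one-line justification (``the marginal gain drops to zero, and the rule stops'') has the same looseness: stopping occurs one candidate after the partition stabilizes. So (c) is rigorously true only for the effective number of bins.

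You already switch to that reading for the edge case $B^{(1)}>d_j$. Use it throughout and (c) becomes immediate, with no implementation assumption. Quantile bins are intervals, so tied values share a bin, and the discretized feature has at most $d_j$ nonempty cells. By (a) it also has at most $B_j^*\le B^{(C)}$ cells.

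\textbf{A smaller point on (b).} Your sign hypothesis on the increments is unnecessary. Proposition~\ref{prop:mono} could not generally supply it anyway, because an equally spaced grid of three or more counts is never a divisibility chain. Concavity alone suffices:
\begin{itemize}
\item the denominators are positive since $\mathrm{IG}\ge 0$;
\item if $\Delta_c<0$, then $\Delta_{c+1}\le\Delta_c<0$ and $g_c<g_{c-1}$, so $r_{c+1}\le\Delta_c/(g_c+\epsilon_0)\le r_c$;
\item the mixed case $\Delta_{c+1}<0\le\Delta_c$ is immediate.
\end{itemize}
This gives (b) exactly as stated.
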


A na\"ive implementation evaluates each $(j,B)$ pair independently at cost $O(C \cdot n \log n)$ per feature. Our implementation sorts once and reuses the sorted order for all candidates, reducing the cost to $O(n \log n + C \cdot n)$ per feature and $O(p\,(n \log n + C \cdot n))$ overall.

\subsubsection{Pair-Aware Bin Selection}
\label{sec:pair_aware}

The marginal procedure can be suboptimal for features whose predictive
value is interaction-driven.  When interaction scoring
(Section~\ref{sec:interaction}) has identified a feature's partners
$\mathcal{P}(j)$, we define the \emph{pair score} for candidate bin
count~$B$ applied to both $f_j$ and partner $f_q$:
\begin{equation}
	\label{eq:ps}
	\mathrm{ps}(f_j, f_q, B)
	\;=\; \max\!\Big(\;
	I\!\big(\mathrm{bin}(f_j,B);\, y \mid \mathrm{bin}(f_q,B)\big),\;\;
	\max\!\big(\IG_j(B),\; \IG_q(B)\big)
	\;\Big).
\end{equation}
The first branch is the conditional mutual information, the unique
signal $f_j$ contributes beyond its partner.  The second ensures the
score is at least as large as either marginal contribution.  Both are
non-negative and bounded above by
$I(\mathrm{bin}(f_j,B), \mathrm{bin}(f_q,B);\,y)$.

The \emph{pair-aware score} for feature~$j$ is
\begin{equation}
	\label{eq:pair_score}
	S_j(B) \;=\; \max\!\Big(\,\IG_j(B),\;\;
	\max_{f_q \in \mathcal{P}(j)}\; \mathrm{ps}(f_j, f_q, B)\,\Big).
\end{equation}
The selection applies a \emph{coarsen-by-default} policy: candidates are
capped at $B \leq B_{\mathrm{cap}} = 8$ (provided at least one
candidate satisfies this bound; otherwise all candidates are retained)
to keep joint contingency tables tractable
($B_{\mathrm{cap}}^2 = 64$ cells), and the smallest~$B$ whose score
reaches at least $\alpha = 0.85$ of the best is selected:
\begin{equation}
	\label{eq:coarsen}
	B_j^* \;=\; \min\!\bigl\{\, B \in \mathcal{B}_{\leq B_{\mathrm{cap}}}
	\;:\; S_j(B) \;\geq\; \alpha \cdot \max_{B'}\, S_j(B') \,\bigr\}.
\end{equation}
Features without interaction partners fall back to the marginal
elbow-stopping rule (Definition~\ref{def:elbow}).

\begin{remark}[Interpretability of Adaptive Resolution]
	\label{rem:pair_aware}
	Adaptive binning does not alter the explanation vocabulary: the final
	items remain statements about original variables falling into intervals;
	only the number of intervals is learned per variable.  Pair-aware
	binning prevents an interaction-relevant variable from being discretised
	at a resolution chosen solely from a flat marginal IG curve, without
	introducing an interaction feature itself.  The coarsen-by-default
	policy and the cap $B \leq 8$ ensure that resolution increases only
	when the training data support it.
\end{remark}

\subsection{Interaction Information Scoring and Feature Construction}
\label{sec:interaction}

Definitions~\ref{def:ii}--\ref{def:ios} characterise interaction-only
feature pairs: pairs whose predictive content resides almost entirely in
their joint distribution.  This section establishes why standard
marginal-selection methods miss such pairs
(Section~\ref{sec:limitation}), connects the interaction information to
the Williams--Beer synergy decomposition
(Section~\ref{sec:synergy_measure}), and describes the budgeted scoring
algorithm that identifies synergistic sources for downstream feature
construction (Section~\ref{sec:source_alg}).
Section~\ref{sec:interaction_paths} specifies how the resulting
interaction evidence is routed into the model.


\subsubsection{Interaction Information as a Synergy Measure}
\label{sec:synergy_measure}

The interaction information (Definition~\ref{def:ii}) directly measures
the excess joint predictive power of a feature pair beyond marginal
contributions.  We establish its connection to PID synergy under a
minimal discreteness assumption.

\begin{assumption}[Discrete Feature Space]
	\label{assum:discrete}
	Features $f_i, f_j$ take values in finite alphabets
	$\mathcal{V}_i, \mathcal{V}_j$ of sizes $B_i, B_j$, and the target $y$
	takes values in a finite label set of size~$K$.  Zero-probability cells
	are permitted; entropy terms use the convention $0\log 0 = 0$.
\end{assumption}

\begin{proposition}[Connection to PID Synergy]
	\label{prop:synergy}
	Under Assumption~\ref{assum:discrete}, the interaction information
	relates to the Williams--Beer Partial Information Decomposition
	\citep{Williams2010} as
	\begin{equation}
		\label{eq:pid}
		\II(f_i, f_j) \;=\;
		\mathrm{Syn}(f_i, f_j \to y)
		\;-\;
		\mathrm{Red}(f_i, f_j \to y),
	\end{equation}
	where $\mathrm{Syn}$ is the synergistic information and $\mathrm{Red}$
	is the redundant (shared) information.
\end{proposition}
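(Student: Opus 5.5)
The plan is to use the defining equations of the two-source Williams--Beer partial information decomposition as linear constraints and solve them for $\II$. Under Assumption~\ref{assum:discrete} all mutual informations in Definition~\ref{def:ii} are finite. For two sources $f_i, f_j$ and target $y$, the Williams--Beer lattice has four atoms: $\mathrm{Red}$, $\mathrm{Unq}_i$, $\mathrm{Unq}_j$ and $\mathrm{Syn}$, all non-negative when $\mathrm{Red}$ is the $I_{\min}$ measure. The decomposition rests on three consistency equations:
\begin{align*}
I(f_i, f_j; y) &= \mathrm{Red} + \mathrm{Unq}_i + \mathrm{Unq}_j + \mathrm{Syn},\\
I(f_i; y) &= \mathrm{Red} + \mathrm{Unq}_i,\\
I(f_j; y) &= \mathrm{Red} + \mathrm{Unq}_j.
\end{align*}
I would state these first, citing \citet{Williams2010}. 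I would also note that the first equation is built into the construction: $\mathrm{Syn}$ is defined as the remainder of the joint information once the redundant and unique atoms are removed. The second and third equations express that the redundancy lattice's partial information function sums over the down-sets of $\{i\}$ and $\{j\}$.

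The key step is then a one-line substitution. Subtract the second and third equations from the first. The two unique terms cancel, $\mathrm{Red}$ appears once positively and twice negatively, and $\mathrm{Syn}$ remains. This gives
\[
I(f_i,f_j;y) - I(f_i;y) - I(f_j;y) = \mathrm{Syn} - \mathrm{Red},
\]
which by Definition~\ref{def:ii} is exactly \eqref{eq:pid}. I would add a remark that the identity does not depend on the particular redundancy measure. It holds for any redundancy function consistent with the three equations, since only those linear relations are used. Consequently, $\II>0$ certifies that synergy dominates redundancy, but it does not bound $\mathrm{Syn}$ alone. This matches the paper's earlier description of the score as a net, blended measure.

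The main obstacle is not the algebra but justifying the consistency equations for the chosen measure. I would verify the two marginal equations from the Williams--Beer property that $I_{\min}(y;\{f_i\}) = I(f_i;y)$, i.e.\ that redundancy of a single source equals its mutual information (self-redundancy), together with the Möbius inversion on the two-source lattice. Discreteness (Assumption~\ref{assum:discrete}) is needed exactly here: it makes the specific informations $I(y=c; f_i)$ finite sums, so that $I_{\min}$ and the Möbius inversion are well defined and all atoms are finite real numbers, with $0\log 0=0$ handling empty cells. If proving non-negativity of the unique atoms became delicate, I would sidestep it, since the identity \eqref{eq:pid} needs only the linear relations and not positivity.
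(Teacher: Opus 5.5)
Your proposal is correct and follows the paper's proof. Both take the Williams--Beer consistency equations (joint information as the sum of the four atoms, each marginal as $\mathrm{Red}$ plus its own unique atom) and subtract, so the unique atoms cancel and $\II(f_i,f_j) = \mathrm{Syn} - \mathrm{Red}$; your appeal to self-redundancy and M\"obius inversion on the two-source lattice just makes explicit what the paper takes as axioms. One small correction to a side remark: $I_{\min}$-redundancy is non-negative, so $\mathrm{Syn} = \II + \mathrm{Red} \geq \II$, which means $\II$ does give a lower bound on $\mathrm{Syn}$, though not an upper bound.
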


\begin{proof}
	The PID decomposes joint mutual information into four non-negative atoms:
	\begin{equation}
		I(f_i, f_j;\, y) \;=\;
		\mathrm{Uniq}(f_i) + \mathrm{Uniq}(f_j)
		+ \mathrm{Red} + \mathrm{Syn},
	\end{equation}
	with marginal identities
	$I(f_k;\,y) = \mathrm{Uniq}(f_k) + \mathrm{Red}$ for
	$k \in \{i,j\}$.  Substituting into~\eqref{eq:ii}:
	\begin{align}
		\II(f_i, f_j)
		&= \bigl[\mathrm{Uniq}(f_i) + \mathrm{Uniq}(f_j) + \mathrm{Red}
		+ \mathrm{Syn}\bigr]
		- \bigl[\mathrm{Uniq}(f_i) + \mathrm{Red}\bigr]
		- \bigl[\mathrm{Uniq}(f_j) + \mathrm{Red}\bigr]
		\nonumber \\
		&= \mathrm{Syn} - \mathrm{Red}. \qedhere
	\end{align}
\end{proof}

\begin{corollary}[Interaction Information for Interaction-Only Pairs]
	\label{cor:ios}
	For an interaction-only pair (Definition~\ref{def:ios}),
	$\mathrm{Uniq}(f_i) \approx 0$, $\mathrm{Uniq}(f_j) \approx 0$, and
	$\mathrm{Red} \leq \min(I(f_i;\,y),\, I(f_j;\,y)) < \epsilon_1$.
	Therefore $\II(f_i, f_j) \approx \mathrm{Syn}(f_i, f_j \to y)$: the
	interaction information is an especially clean proxy for synergy in the
	interaction-only regime.
\end{corollary}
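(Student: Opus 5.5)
The argument rests on the four-atom Williams--Beer decomposition used in the proof of Proposition~\ref{prop:synergy}, together with the non-negativity of every atom. I will first bound each atom that is meant to vanish or stay small, and then substitute these bounds into the identity $\II = \mathrm{Syn} - \mathrm{Red}$. The ingredients are the marginal identities $I(f_k;y) = \mathrm{Uniq}(f_k) + \mathrm{Red}$ for $k\in\{i,j\}$ and the non-negativity of $\mathrm{Uniq}(f_i)$, $\mathrm{Uniq}(f_j)$, $\mathrm{Red}$ and $\mathrm{Syn}$. Condition~(a) of Definition~\ref{def:ios} gives $I(f_k;y) < \epsilon_1$.

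\emph{Step 1 (redundancy bound).} From $I(f_k;y) = \mathrm{Uniq}(f_k) + \mathrm{Red}$ and $\mathrm{Uniq}(f_k)\ge 0$ we get $\mathrm{Red} \le I(f_k;y)$ for each $k$. Hence $\mathrm{Red} \le \min(I(f_i;y), I(f_j;y)) < \epsilon_1$.

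\emph{Step 2 (unique information).} Symmetrically, $\mathrm{Red}\ge 0$ gives $0 \le \mathrm{Uniq}(f_k) \le I(f_k;y) < \epsilon_1$. This is how I read the informal statement $\mathrm{Uniq}(f_k)\approx 0$: "approximately zero" means "at most $\epsilon_1$, and $\epsilon_1 \ll \epsilon_2$".

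\emph{Step 3 (synergy dominates).} Proposition~\ref{prop:synergy} gives $\II = \mathrm{Syn} - \mathrm{Red}$, so
\[
\mathrm{Syn} - \epsilon_1 < \II \le \mathrm{Syn},
\]
where the upper bound uses $\mathrm{Red}\ge 0$ and the lower bound uses Step~1.

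To turn this into a relative statement, I will bound $\mathrm{Syn}$ from below. The decomposition gives
\[
\mathrm{Syn} = I(f_i,f_j;y) - \mathrm{Uniq}(f_i) - \mathrm{Uniq}(f_j) - \mathrm{Red}.
\]
Writing $\mathrm{Uniq}(f_i) + \mathrm{Red} = I(f_i;y)$ and $\mathrm{Uniq}(f_j) \le I(f_j;y)$, this yields
\[
\mathrm{Syn} \ge I(f_i,f_j;y) - I(f_i;y) - I(f_j;y) > \epsilon_2 - 2\epsilon_1.
\]
Combining with Step~3,
\[
1 - \frac{\epsilon_1}{\epsilon_2 - 2\epsilon_1} < \frac{\II}{\mathrm{Syn}} \le 1.
\]
This ratio tends to $1$ as $\epsilon_1/\epsilon_2 \to 0$, which makes $\II \approx \mathrm{Syn}$ precise.

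The only delicate point is interpretive rather than technical. The corollary uses "$\approx$", so the proof must commit to a quantitative meaning. I will state the additive bounds $|\II - \mathrm{Syn}| < \epsilon_1$ and $\mathrm{Uniq}(f_k) < \epsilon_1$, together with the relative ratio bound above, and note that they depend only on non-negativity of the PID atoms. They therefore hold for any redundancy measure satisfying the Williams--Beer axioms, not only for $I_{\min}$.
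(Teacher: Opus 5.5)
Your proof is correct and takes the same route as the paper, which gives its reasoning inline in the corollary. The marginal identities $I(f_k;y)=\mathrm{Uniq}(f_k)+\mathrm{Red}$, together with non-negativity, give $\mathrm{Red}\le\min(I(f_i;y),I(f_j;y))<\epsilon_1$ and $\mathrm{Uniq}(f_k)\le I(f_k;y)<\epsilon_1$; substituting into $\II=\mathrm{Syn}-\mathrm{Red}$ from Proposition~\ref{prop:synergy} then yields $\II\approx\mathrm{Syn}$, and your bounds $\mathrm{Syn}-\epsilon_1<\II\le\mathrm{Syn}$ and $\II/\mathrm{Syn}>1-\epsilon_1/(\epsilon_2-2\epsilon_1)$ make that ``$\approx$'' precise.

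One small correction to your closing generality remark. The Williams--Beer monotonicity axiom gives $\mathrm{Uniq}(f_k)\ge 0$, but symmetry, self-redundancy and monotonicity do not by themselves force $\mathrm{Red}\ge 0$. The claim should therefore be stated for PIDs with non-negative redundancy, as the paper assumes, not for every measure satisfying those three axioms.
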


\begin{remark}[PID Framework Dependence]
	The identity $I(f_k;\,y) = \mathrm{Uniq}(f_k) + \mathrm{Red}$ is an
	axiom of the Williams--Beer framework \citep{Williams2010}; alternative
	PID definitions \citep{Griffith2014, Bertschinger2014} may yield
	different decompositions.  However, $\II$ itself is a well-defined
	information-theoretic quantity, a signed combination of standard
	mutual-information terms, and does not depend on any PID framework.
	Only its interpretation as $\mathrm{Syn} - \mathrm{Red}$ is
	framework-dependent; the ranking criterion based on $\II$ is not.
\end{remark}


\subsubsection{Limitations of Marginal Feature Selection}
\label{sec:limitation}

With the synergy characterisation in place, we identify the structural
mechanism by which interaction-only pairs are suppressed in any
selection pipeline that ranks features by marginal target association.

\begin{assumption}[Marginal-Utility Coupling]
	\label{assum:utility}
	The utility assigned to feature~$f_j$ for class~$c$ satisfies
	$\EU(f_j, c) = g\bigl(I(f_j;\,y)\bigr) \cdot w_c$, where
	$g\colon \R_{\geq 0} \to \R_{\geq 0}$ is continuous, non-decreasing,
	with $g(0)=0$, and $w_c > 0$ is a class weight.  This captures any
	utility that is a continuous monotone function of mutual information.
	Correlation-based utilities such as that in~\citep{Krishnamoorthy2024}
	satisfy the weaker property that $I(f_j;\,y) \to 0$ implies
	$\mathrm{corr}(f_j,y) \to 0$ (since vanishing MI implies independence),
	so the conclusion of Proposition~\ref{thm:hug_limitation} applies to
	them as well.
\end{assumption}

\begin{proposition}[Marginal-Utility Suppression]
	\label{thm:hug_limitation}
	Under Assumption~\ref{assum:utility}, let $(f_i, f_j)$ be an
	interaction-only pair with $I(f_i;\,y),\, I(f_j;\,y) < \epsilon_1$.
	\begin{enumerate}[label=(\alph*),nosep]
		\item \emph{Utility suppression.}
		For each constituent feature,
		$\EU(f_k, c) \leq g(\epsilon_1) \cdot w_c \to 0$ as
		$\epsilon_1 \to 0$.  Any top-$k$ selection stage that ranks
		items by a non-decreasing function of marginal feature--target
		association therefore assigns vanishing priority to both
		features before their joint value is evaluated.
		
		\item \emph{Binarisation loss.}
		The binary indicator $\mathbb{1}[\text{pattern present}]$ is a
		deterministic function of $(f_i,f_j)$, so
		$I(\mathbb{1}[\text{pattern}];\,y) \leq I(f_i,f_j;\,y)$ by the
		data-processing inequality.  This many-to-one mapping collapses
		the $B_i \times B_j$ joint configuration space to a single bit,
		discarding the within-pattern variation on which
		synergistic signal depends.
	\end{enumerate}
\end{proposition}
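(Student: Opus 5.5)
The two parts are essentially independent, and I will handle them in order.

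For part~(a), I will start from Assumption~\ref{assum:utility}: $\EU(f_k,c)=g(I(f_k;y))\,w_c$. Since $I(f_k;y)<\epsilon_1$ and $g$ is non-decreasing, we get $\EU(f_k,c)\le g(\epsilon_1)\,w_c$. Continuity of $g$ at $0$ together with $g(0)=0$ then gives $g(\epsilon_1)\to 0$ as $\epsilon_1\to 0$.

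To make the ranking statement precise, I will compare against any feature $f_m$ with $I(f_m;y)\ge\delta>0$ and $g(\delta)>0$. Once $\epsilon_1$ is small enough that $g(\epsilon_1)<g(\delta)$, both $f_i$ and $f_j$ rank strictly below $f_m$. This comparison is made on marginal quantities only, so the joint term $I(f_i,f_j;y)$ never enters.

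For the correlation-based utility of Definition~\ref{def:hug}, I will not rely on the claim that $\mathrm{corr}\to 0$. Instead I will use a quantitative bound. For binary $y$ and bounded $f_k$ (after standardisation, or on the discretised scale), $|\mathrm{cov}(f_k,y)|$ is controlled by $\|P_{f_k y}-P_{f_k}P_y\|_{TV}$, which Pinsker bounds by $\sqrt{I(f_k;y)/2}$. This yields a continuous, non-decreasing $g(t)=C\sqrt{t/2}$ with $g(0)=0$, so the previous argument applies unchanged.

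For part~(b), I will show that the indicator $Z=\mathbb{1}[(f_i,f_j)\in A]$ for the pattern's cell set $A$ is a measurable function $h(f_i,f_j)$. It follows that $y\to(f_i,f_j)\to Z$ is a Markov chain, and the data-processing inequality (as in Proposition~\ref{prop:mono}) gives $I(Z;y)\le I(f_i,f_j;y)$.

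The "single bit" loss I will quantify separately via $I(Z;y)\le H(Z)\le\log 2$. To identify what is discarded, I will use the chain rule:
\[
I(f_i,f_j;y)=I(Z;y)+I(f_i,f_j;y\mid Z).
\]
The second term is exactly the within-pattern (and within-complement) variation that is lost. An XOR example over $B_i=B_j=2$ would illustrate that this term can carry most of the synergy.

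The main obstacle is the extension to correlation-based utilities. Correlation is not literally a function of mutual information, so the claimed monotone coupling has to be replaced by the Pinsker-type upper bound. That bound requires boundedness or normalisation of $f_k$, and I will state this explicitly as a side assumption.
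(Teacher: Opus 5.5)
Your core steps match the paper's. For (a) its proof is exactly ``$g$ non-decreasing gives $\EU(f_k,c)\le g(\epsilon_1)w_c$, and continuity with $g(0)=0$ sends this to zero,'' and for (b) it just invokes the data-processing inequality.

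You differ in the supporting arguments, and yours are more rigorous on both informal points; the paper's are shorter.
\begin{itemize}
\item For correlation-based utilities, the paper only asserts inside Assumption~\ref{assum:utility} that $I\to 0$ forces $\mathrm{corr}\to 0$ ``since vanishing MI implies independence.'' That covers the endpoint $I=0$ but not the limit. Your Pinsker bound supplies the missing quantitative continuity.
\item For the qualitative half of (b), the paper counts bits: $\log_2\min(B_iB_j,K)$ versus one. This is vacuous for binary labels, since then $\log_2\min(B_iB_j,2)=1$ and $I(f_i,f_j;y)\le H(y)\le\log 2$ anyway. Your identity $I(f_i,f_j;y)=I(Z;y)+I(f_i,f_j;y\mid Z)$ pins down exactly what is discarded. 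It also shows the loss is zero precisely when $P(y=1\mid f_i,f_j)$ is constant on $A$ and on its complement.
\end{itemize}

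Three small repairs are needed.
\begin{enumerate}
\item Standardisation does not make $f_k$ bounded, and you need no boundedness at all. Let $p=P(y=1)$ and $r(f_k)=P(y=1\mid f_k)$. Then $\mathrm{cov}(f_k,y)=\mathrm{cov}(f_k,r(f_k))$. Averaging the Bernoulli Pinsker inequality $d(a\,\|\,b)\ge 2(a-b)^2$ over $f_k$ gives $\mathrm{Var}(r(f_k))\le I(f_k;y)/2$. Cauchy--Schwarz then yields $|\mathrm{corr}(f_k,y)|\le\sqrt{I(f_k;y)/(2p(1-p))}$, with a constant that is the same for every feature.
\item This is only an upper bound, so your ranking comparison does not carry over unchanged. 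The reference feature must satisfy $|\mathrm{corr}(f_m,y)|\ge\delta'>0$ directly, because $I(f_m;y)\ge\delta$ is compatible with zero correlation (e.g.\ $y=\mathbb{1}[|f_m|>1]$ with $f_m$ symmetric).
\item The XOR illustration needs $A$ to be a single cell, i.e.\ a conjunctive itemset as in HUG. For noiseless uniform XOR, $Z=\mathbb{1}[f_i=0,f_j=1]$ gives $I(Z;y)=1-\tfrac34 h_2(1/3)\approx 0.31$ bits of the full $1$ bit. By contrast, the union $A=\{(0,1),(1,0)\}$ is a sufficient statistic and loses nothing.
\end{enumerate}
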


\begin{proof}
	Part~(a): $g$ is continuous and non-decreasing with $g(0)=0$, so
	$I(f_k;\,y) < \epsilon_1$ gives
	$\EU(f_k,c) = g(I(f_k;\,y)) \cdot w_c \leq g(\epsilon_1) \cdot w_c
	\to 0$ as $\epsilon_1 \to 0$.
	Part~(b): the data-processing inequality gives the bound.  The
	qualitative claim follows from the observation that a binary variable
	has entropy at most one bit, whereas the interaction-only pair's
	predictive content, distributed across the full joint contingency
	table, can require $\log_2 \min(B_i B_j, K)$ bits to represent.
\end{proof}

\begin{remark}[Scope]
	\label{rem:scope}
	This result identifies a structural tendency in marginal-selection
	methods, not a peculiarity of any specific algorithm.  The limitation is
	most direct for pattern-mining frameworks that apply marginal utilities
	before evaluating candidate itemsets.  EBM's greedy FAST pair selection
	partially mitigates this by examining residuals after main-effect
	fitting, but if both features contribute negligible main effects their
	pairwise signal competes with $\binom{p}{2}$ candidates under a fixed
	interaction budget.  RuleFit's tree-derived rules can implicitly capture
	interactions, but only those that the greedy splitting heuristic happens
	to explore.  Section~\ref{sec:experiments} evaluates how much these
	implicit mechanisms recover in practice.
\end{remark}


\subsubsection{Source Selection Algorithm}
\label{sec:source_alg}

Algorithm~\ref{alg:source} formalises the interaction-aware source
selection as a budgeted best-partner ranking procedure.  Each source
column is scored by the strongest interaction information observed with
any eligible partner, and the top~$s$ sources are retained.

\begin{algorithm}[t]
	\caption{Budgeted Best-Partner Interaction Source Selection}
	\label{alg:source}
	\begin{algorithmic}[1]
		\Require Feature matrix $X \in \R^{n \times p}$, labels
		$y \in \{1,\ldots,K\}^n$, source budget~$s$, coarse bin
		count $B_c = 4$, optional partner-search budget~$r$
		\Ensure Ranked source records
		$(j,\;\mathrm{partner}(j),\;\widehat{\Theta}_j)$
		\State \textbf{Discretise:} For each eligible column~$j$, compute
		$c_j \gets \mathrm{qbin}(f_j, B_c)$
		\Comment{Equal-frequency bins; non-finite rows skipped}
		\State \textbf{Marginal MI:} For each~$j$, compute
		$\widehat{\IG}_j \gets I(c_j;\,y)$ from the
		$B_c \times K$ contingency table
		\State Initialise $\widehat{\Theta}_j \gets 0$,\;
		$\mathrm{partner}(j) \gets \varnothing$ for every eligible~$j$
		\If{partner-search budget $r$ is specified}
		\State Choose a deterministic subset of $r$ partner anchors and
		score all pairs with at least one anchor
		\Else
		\State Score all $\binom{p}{2}$ unordered pairs
		\EndIf
		\For{each scored pair $(a,b)$}
		\State $\widehat{\IG}_{ab} \gets I(c_a, c_b;\,y)$ from the
		$B_c^2 \times K$ contingency table
		\State $\widehat{\Delta}_{ab} \gets
		\widehat{\IG}_{ab} - \widehat{\IG}_a - \widehat{\IG}_b$
		\If{$\widehat{\Delta}_{ab} > \widehat{\Theta}_a$}
		$\;\widehat{\Theta}_a \gets \widehat{\Delta}_{ab}$;\;
		$\mathrm{partner}(a) \gets b$
		\EndIf
		\If{$\widehat{\Delta}_{ab} > \widehat{\Theta}_b$}
		$\;\widehat{\Theta}_b \gets \widehat{\Delta}_{ab}$;\;
		$\mathrm{partner}(b) \gets a$
		\EndIf
		\EndFor
		\State Rank by $\widehat{\Theta}_j$ descending, breaking ties by
		marginal information then by deterministic index order
		\State \Return the top $s$ source records with their best partners
	\end{algorithmic}
\end{algorithm}

\begin{remark}[Coarse Grid for Interaction Estimation]
	\label{rem:coarse}
	The coarse grid $B_c = 4$ is deliberate: the joint contingency table has
	only $B_c^2 K = 16K$ cells, yielding more reliable per-cell frequencies
	than finer grids at the same sample size.  Since interaction scores
	serve as a ranking criterion for source selection rather than as
	calibrated downstream features, this resolution is appropriate.
	Theorem~\ref{thm:sample} makes the trade-off precise: the
	concentration bound scales with $B_i B_j K$, so coarser grids give
	tighter control at fixed~$n$.
\end{remark}


\subsubsection{Routing Pathways for Interaction Evidence}
\label{sec:interaction_paths}

Interaction information is evidence, not a representation.  Once a pair
receives a high synergy score, the model must specify how that evidence
enters the explanation.  IAIML provides two mutually exclusive pathways.

\paragraph{Interaction-Relaxed Mining (\IAR).}
Survivor columns identified by Algorithm~\ref{alg:source} are admitted
into the native pattern-mining candidate pool, even if their marginal
utility alone would not qualify them.  No generated features are added:
the downstream classifier sees the same kinds of objects as before (original features and mined patterns) but the candidate pattern space
is broader.  This is appropriate when the final explanation should
remain rule-like (``these original conditions co-occur'') rather than
algebraic.  The cost is that the interaction is implicit in which
patterns were allowed to grow; understanding why a weak-marginal
variable entered the model requires inspecting the mining provenance.

\paragraph{Augmented-Pair Operators (\IAA).}
Each selected source--partner pair is converted into named continuous
operators (Section~\ref{sec:aug_gen}) and appended to the downstream
linear classifier.  The explanation becomes a sparse linear model over
original features, patterns, and explicit pair terms.  Each added
column has a defined algebraic meaning and a fitted coefficient, so
the model remains inherently interpretable, though less rule-like than
relaxed mining.  The advantage is directness: if a multiplicative or
distance-based interaction drives the target, the model assigns weight
to that named interaction rather than relying on a binary itemset to
approximate the geometry.

\medskip\noindent
The two pathways are mutually exclusive: enabling both would let the same
interaction signal alter both the upstream pattern search and the
downstream feature matrix, weakening attribution.


\subsubsection{Augmented Feature Generation}
\label{sec:aug_gen}

For each selected pair $(f_i, f_j) \in \mathcal{S}$, the four operators
from Definition~\ref{def:aug} are applied to the \emph{original
	continuous} feature values, producing up to $4|\mathcal{S}|$ augmented
features.  Operating on continuous values rather than discretised
representations preserves within-bin variation that carries the
interaction signal.

Each operator linearises a specific interaction structure.  Writing
$h\colon \R^2 \to \R$ for the true interaction function relating
$(f_i,f_j)$ to the log-odds of $y=1$:
$\phi^{\mathrm{prod}}_{ij}$ covers multiplicative effects
($h \propto ab$),
$\phi^{\mathrm{abs}}_{ij}$ covers distance-based effects
($h \propto |a-b|$),
$\phi^{\mathrm{sign}}_{ij}$ covers directional effects
($h \propto a-b$), and
$\phi^{\mathrm{sum}}_{ij}$ covers additive effects
($h \propto a+b$).
In each case $h(a,b) = \beta\cdot\phi(a,b)$, so a linear classifier
with coefficient~$\beta$ recovers~$h$ exactly.  The operator set is
deliberately small: it is not intended to span all pairwise functions.
Threshold- and parity-style interactions remain better represented by
discretised patterns or tree-derived rules. 

\subsection{Complexity-Budgeted Learning}
\label{sec:budget}

\IAA{} models combine three feature families (mined patterns, original
features, and augmented interaction operators), so the effective
dimensionality $p_{\text{eff}} = |P_{\text{mined}}| + |F_{\text{orig}}|
+ |F_{\text{aug}}|$ is no longer controlled by a single parameter.  An
unconstrained $p_{\text{eff}}$ undermines both interpretability and
generalisation.  We therefore impose a budget governed by a single
parameter $K$ (\texttt{topK}).

\begin{definition}[Feature Budget]
	\label{def:budget}
	Let $B_p$, $B_o$, $B_a$ denote the number of retained mined patterns,
	original features, and augmented interaction operators respectively.
	Two budget modes are provided:
	\begin{enumerate}[label=(\alph*),nosep]
		\item \emph{Per-family} (default).  Each family is independently
		capped at~$K$: $B_p \leq K$, $B_o \leq K$, $B_a \leq K$, giving
		$|\Phi_{\mathrm{final}}| \leq 3K$.
		\item \emph{Strict}.  A single global cap
		$|\Phi_{\mathrm{final}}| \leq K$ is enforced by pooling all
		features into a unified ranking and retaining the top~$K$
		regardless of family membership.
	\end{enumerate}
\end{definition}

Within each family, features are ranked by an appropriate relevance
criterion and the top entries are retained: mined patterns by
information gain $\IG(X)$; original features by adaptive-binning
information gain $\IG_j(B_j^*)$; augmented operators by target
correlation $|\mathrm{corr}(\phi_{ij}^{(\cdot)},\, y)|$.  In strict
mode the per-family rankings are computed first, then a normalised
score (zero mean, unit variance within each family) determines the
global selection.


\FloatBarrier
\section{Theoretical Guarantees}
\label{sec:theory}

This section provides conditional guarantees for the discretized
screening and budgeted representation used by IAIML.  All results
operate on the finite-alphabet plug-in mutual-information estimator
applied after binning and none of them require a lower bound on cell
probabilities.  The concentration and ranking results
(Sections~\ref{sec:sample}--\ref{sec:ranking}) are distribution-free.
The generalization bound (Section~\ref{sec:gen}) treats the dictionary
size~$N$ and budget~$\Bmax$ as fixed.  All four results concern
pairwise interaction structure.

We address four questions: how empirical interaction scores concentrate
as a function of contingency-table size (Section~\ref{sec:sample}),
when this suffices for stable source selection
(Section~\ref{sec:ranking}), what generalization guarantee the
complexity budget provides (Section~\ref{sec:gen}), and the
computational cost of the full pipeline (Section~\ref{sec:comp}).

\subsection{Finite-Sample Concentration of Interaction Scores}
\label{sec:sample}

The interaction-scoring stage ranks source variables by their strongest
partner scores.  The relevant finite-sample question is how the
empirical interaction score concentrates as a function of the
contingency-table size.

Let
$\Delta_{ij} = I((f_i,f_j);\,y) - I(f_i;\,y) - I(f_j;\,y)$
denote the population interaction score for a binned pair and let
$\widehat{\Delta}_{ij}$ be the plug-in estimate.  For an alphabet of
size~$a$ define the entropy modulus
\begin{equation}
	\omega_a(\eta)
	= \eta\log(a-1) + h_2(\eta),
	\qquad 0 < \eta < 1-1/a,
\end{equation}
where $h_2$ is the binary entropy function.  This is a standard
Fannes-type continuity term \citep{Fannes1973} that handles
zero-probability cells through total-variation continuity.

\begin{theorem}[Uniform Concentration for Discretized Interaction
	Scores]
	\label{thm:sample}
	Let $\mathcal{Q}$ be a set of $m$ candidate feature pairs whose binned
	pair-label tables each have at most
	$M = \max_{(i,j)\in\mathcal{Q}} B_i B_j K$ cells.  If
	$0<\eta<1-1/M$ and
	\begin{equation}
		\label{eq:l1_sample}
		n \;\geq\; \frac{2}{\eta^2}
		\bigl(M\log 2 + \log(m/\delta)\bigr),
	\end{equation}
	then with probability at least $1-\delta$,
	\begin{equation}
		\label{eq:uniform_delta}
		\max_{(i,j)\in\mathcal{Q}}
		|\widehat{\Delta}_{ij}-\Delta_{ij}|
		\;\leq\; 7\,\omega_M(\eta).
	\end{equation}
\end{theorem}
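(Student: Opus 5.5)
The plan is to reduce the statement to total-variation concentration of a single empirical distribution, then convert that into entropy deviations with a Fannes-type continuity bound. First I would write the population interaction score entirely in joint entropies. Using $I(X;Y)=H(X)+H(Y)-H(X,Y)$ three times gives
\begin{equation*}
\Delta_{ij} = H(f_i,f_j) - H(f_i,f_j,y) - H(f_i) + H(f_i,y) - H(f_j) + H(f_j,y) - H(y).
\end{equation*}
The plug-in estimate $\widehat{\Delta}_{ij}$ is the same signed combination of plug-in entropies, because all terms are computed from the single empirical table $\hat P_n$ on the $B_iB_jK$ cells. This expression has exactly seven entropy terms, and that is where the constant $7$ comes from. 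The triangle inequality then gives $|\widehat{\Delta}_{ij}-\Delta_{ij}|\le\sum_{t=1}^{7}|\hat H_t-H_t|$.

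Second, I would control the empirical distribution of each pair's table in total variation. I would use the Bretagnolle--Huber-Carol / Weissman et al.\ inequality: for a distribution on at most $M$ cells, $\Pr(\|\hat P_n-P\|_1\ge\varepsilon)\le (2^M-2)e^{-n\varepsilon^2/2}\le \exp(M\log 2 - n\varepsilon^2/2)$. Take $\varepsilon=\eta$ and union-bound over the $m$ pairs in $\mathcal{Q}$. Then condition~\eqref{eq:l1_sample} makes the total failure probability at most $\delta$. On the complement, every pair satisfies $\|\hat P_n-P\|_1\le\eta$, so its total-variation distance is at most $\eta/2\le\eta$. Each of the seven terms is the entropy of a marginal (a pushforward) of the full table, and total variation can only contract under marginalization. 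Hence every marginal pair $(\hat P_t,P_t)$ is also within total variation $\eta$.

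Third, I would apply the Fannes--Audenaert continuity bound for classical entropy. For distributions on an alphabet of size $a$ with total-variation distance $T\le 1-1/a$, it gives $|H(P)-H(Q)|\le T\log(a-1)+h_2(T)=\omega_a(T)$. Since $\omega_a$ is non-decreasing in $a$, and each marginal alphabet has size at most $M$, the problem reduces to showing $\omega_a(T)\le\omega_M(\eta)$. I would get this from monotonicity in the argument. The derivative of $\omega_M$ is $\log(M-1)+\log\frac{1-\eta}{\eta}$, which is positive exactly when $\eta<1-1/M$. Together with monotonicity in $a$, this gives $\omega_a(T)\le\omega_a(\eta)\le\omega_M(\eta)$ whenever $T\le\eta$ and the Fannes hypothesis applies to the smaller alphabet. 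Summing the seven bounds yields~\eqref{eq:uniform_delta}.

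The main obstacle is the smaller alphabets, for example $y$ with $K=2$, where $\eta$ may exceed $1-1/a$ so that Fannes' hypothesis fails. In that regime I would use the trivial bound $|H(P_t)-H(\hat P_t)|\le\log a$. I would then check that $\log a\le\omega_M(\eta)$ whenever $\eta\ge 1-1/a\ge 1/2$. This should follow from $\omega_M(\eta)\ge \eta\log(M-1)$, since the marginal alphabets are strict factors of the $M$-cell table and so satisfy $a^2\le M$ in the relevant cases. If this comparison turns out not to be clean, the fallback is to apply the continuity bound with the full-alphabet modulus $\omega_M$ to each marginal, viewing it as embedded in $M$ cells, which is legitimate since padding with zero-probability cells is allowed by Assumption~\ref{assum:discrete}.
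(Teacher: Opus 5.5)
Your proposal is correct and follows the paper's proof step for step: Weissman's $\ell_1$ bound with a union bound over the $m$ pairs, contraction of $\ell_1$ distance under marginalization, the seven-entropy expansion of $\Delta_{ij}$, a Fannes-type bound on each term, and the triangle inequality. Your handling of the smaller marginal alphabets is more careful than the paper's proof, which only invokes monotonicity of $\omega_a$ in $a$ and never notes that $\eta<1-1/a$ can fail (e.g.\ for $y$ with $K=2$). Your zero-padding fallback closes this gap cleanly, as does the chain $\log a=\omega_a(1-1/a)\le\omega_M(1-1/a)\le\omega_M(\eta)$ when $\eta\ge 1-1/a$. By contrast, your tentative claim $a^2\le M$ is false in general and should be dropped: on the paper's grid $B_c=4$, $K=2$, the $(f_i,f_j)$ marginal has $a=16$ while $M=32$.
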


\begin{proof}
	For a fixed pair the empirical distribution over $(f_i,f_j,y)$ is
	multinomial on at most $M$ cells.  The bound of
	\citet{Weissman2003} gives
	\begin{equation}
		\Pr\!\bigl(\|\widehat{P}_{ijy}-P_{ijy}\|_1 > \eta\bigr)
		\leq \exp(M\log 2 - n\eta^2/2).
	\end{equation}
	A union bound over $m$ pairs yields
	equation~\eqref{eq:l1_sample}.  Marginalization cannot increase
	$\ell_1$ distance so the same event controls all marginals needed
	below.
	
	Expanding $\Delta_{ij}$ in entropy terms:
	\begin{equation}
		\Delta_{ij}
		= H(f_i,f_j) - H(f_i) - H(f_j) - H(y)
		+ H(f_i,y) + H(f_j,y) - H(f_i,f_j,y).
	\end{equation}
	This involves seven distinct entropy terms each over an alphabet of
	size at most~$M$.  Since $\omega_a$ is increasing in~$a$, Fannes'
	inequality gives
	$|\widehat{H}(\cdot)-H(\cdot)| \leq \omega_M(\eta)$ per term and
	the triangle inequality yields
	$|\widehat{\Delta}_{ij}-\Delta_{ij}| \leq 7\,\omega_M(\eta)$.
\end{proof}

The dominant quantity is $M = B_i B_j K$: stable interaction scores
require data proportional to the joint contingency-table size.  This
motivates the coarse $B_c = 4$ grid for interaction screening and the
cap $B_{\mathrm{cap}} = 8$ in pair-aware bin selection.

\subsection{Source-Ranking Consistency}
\label{sec:ranking}

Each source feature receives the best interaction score achieved with
any eligible partner.  For feature~$j$ define the population
best-partner score
\begin{equation}
	\Theta_j
	= \max\!\bigl\{0,\;
	\max_{q:(j,q)\in\mathcal{Q}} \Delta_{jq}\bigr\},
\end{equation}
and let $\widehat{\Theta}_j$ be the empirical analogue.  Order the
population scores as
$\Theta_{(1)} \geq \Theta_{(2)} \geq \cdots \geq \Theta_{(p')}$.

\begin{theorem}[Best-Partner Source Ranking Consistency]
	\label{thm:ranking}
	Let $S_s^\star$ and $\widehat{S}_s$ be the population and empirical
	top-$s$ source sets.  If the source-selection margin
	$\Gamma_s = \Theta_{(s)} - \Theta_{(s+1)} > 0$ satisfies
	\begin{equation}
		\label{eq:ranking_eta}
		7\,\omega_M(\eta) \;\leq\; \Gamma_s/2
	\end{equation}
	with $n$ satisfying equation~\eqref{eq:l1_sample}, then
	$\widehat{S}_s = S_s^\star$ with probability at least $1-\delta$.
\end{theorem}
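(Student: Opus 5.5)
The proof conditions on the high-probability event $\mathcal{E}$ supplied by Theorem~\ref{thm:sample}. With $n$ satisfying equation~\eqref{eq:l1_sample}, $\mathcal{E}$ holds with probability at least $1-\delta$, and on $\mathcal{E}$ every pair $(j,q)\in\mathcal{Q}$ satisfies $|\widehat{\Delta}_{jq}-\Delta_{jq}|\le 7\,\omega_M(\eta)=:\varepsilon$. Everything below is deterministic on $\mathcal{E}$, so no additional union bound is needed. The argument lifts the uniform pairwise deviation to a uniform deviation of the best-partner scores, and then uses the margin to separate the top-$s$ set.

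\textbf{Step 1: uniform control of the best-partner scores.} Both $\Theta_j$ and $\widehat{\Theta}_j$ are the map $F(v)=\max\{0,\max_q v_q\}$ applied to the vectors $(\Delta_{jq})_q$ and $(\widehat{\Delta}_{jq})_q$, taken over the same eligible partner set. This matches Algorithm~\ref{alg:source}: it initialises $\widehat{\Theta}_j\gets 0$ and updates only on strict improvement, which is exactly the max with $0$. The map $F$ is $1$-Lipschitz in $\ell_\infty$, because a maximum of coordinates and a clipping at $0$ are each nonexpansive. Hence on $\mathcal{E}$ we have $|\widehat{\Theta}_j-\Theta_j|\le\varepsilon$ for every $j$. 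The argmax partner may differ between the empirical and population problems, but this does not matter, since only the value of the maximum is ranked.

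\textbf{Step 2: separation.} Take $j\in S_s^\star$ and $k\notin S_s^\star$, so that $\Theta_j\ge\Theta_{(s)}$ and $\Theta_k\le\Theta_{(s+1)}$. Then
\[
\widehat{\Theta}_j-\widehat{\Theta}_k \;\ge\; \Theta_j-\Theta_k-2\varepsilon \;\ge\; \Gamma_s-2\varepsilon .
\]
Condition~\eqref{eq:ranking_eta} says $\varepsilon\le\Gamma_s/2$, which only yields $\Gamma_s-2\varepsilon\ge 0$. This is the main obstacle. Under the condition as written, ties $\widehat{\Theta}_j=\widehat{\Theta}_k$ are possible exactly at the boundary, and the algorithm then falls back on its tie-breaking rule (marginal information, then index), which need not favour $j$. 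I would handle this in one of two ways.
\begin{itemize}
\item Read the conclusion from the concentration bound with strict slack. The Weissman event gives $\|\widehat P-P\|_1\le\eta$, and the Fannes-type inequality is tight only in degenerate cases, so in fact one needs strict inequality. The clean fix is to state the requirement as $7\,\omega_M(\eta)<\Gamma_s/2$.
\item Alternatively, note that the first inequality in the display is strict whenever either deviation bound is strict.
\end{itemize}
In the write-up I would make the strictness explicit and remark that equality is a measure-zero boundary case. If the top-$s$ population set is itself only defined up to ties, $\Gamma_s>0$ already ensures it is unique.

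\textbf{Step 3: conclusion.} Given strict separation, every element of $S_s^\star$ has a strictly larger empirical score than every element outside it. The $s$ largest empirical scores are therefore exactly those of $S_s^\star$, regardless of tie-breaking among the remaining features, and so $\widehat{S}_s=S_s^\star$ on $\mathcal{E}$. This gives the stated probability of at least $1-\delta$.
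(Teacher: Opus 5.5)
Your Steps 1 and 3 and the separation display in Step 2 are exactly the paper's proof. You condition on the event of Theorem~\ref{thm:sample}, use that the clipped maximum over partners is non-expansive in the sup norm, and separate $S_s^\star$ from its complement via $\Gamma_s$. You are also right that the paper's argument stops at $\widehat\Theta_a-\widehat\Theta_b\ge 0$. That does not exclude a tie, which the tie-breaking rule of Algorithm~\ref{alg:source} (marginal information, then index) could resolve against $S_s^\star$, and the paper never addresses this.

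Your proposed remedies, however, do not close the point for the statement as written. Replacing $\le$ by $<$ in \eqref{eq:ranking_eta} proves a different theorem. A ``measure-zero'' argument is not available, because the plug-in scores are functions of multinomial counts and take finitely many values, so exact ties can have positive probability.

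Your second bullet is the right idea, and it can be made rigorous by reopening Theorem~\ref{thm:sample}. The Fannes--Audenaert bound is stated in terms of the total-variation distance, and on the concentration event this is $\tfrac12\|\widehat P-P\|_1\le \eta/2$ for the joint table and hence for every marginal. Each of the seven entropy terms lives on an alphabet of size at most $M$, and $\omega_M$ is strictly increasing on $(0,1-1/M)$. So on the event every entropy error is at most $\omega_M(\eta/2)<\omega_M(\eta)$, which gives $\max_j|\widehat\Theta_j-\Theta_j|\le 7\,\omega_M(\eta/2)<\Gamma_s/2$. Separation is therefore strict, and $\widehat S_s=S_s^\star$ holds under the non-strict hypothesis exactly as stated.
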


\begin{proof}
	On the event of Theorem~\ref{thm:sample} every pair score is
	estimated within $7\omega_M(\eta)$.  Taking a maximum over partners
	is non-expansive in the sup norm so
	$\max_j |\widehat{\Theta}_j - \Theta_j| \leq 7\omega_M(\eta)$.
	If this is at most $\Gamma_s/2$ then for all
	$a \in S_s^\star$ and $b \notin S_s^\star$:
	\begin{equation}
		\widehat{\Theta}_a - \widehat{\Theta}_b
		\;\geq\;
		(\Theta_a - \Gamma_s/2) - (\Theta_b + \Gamma_s/2)
		\;\geq\; 0. \qedhere
	\end{equation}
\end{proof}

\begin{corollary}[Threshold Selection]
	\label{cor:threshold}
	Fix a threshold $\tau$ and margin $\gamma > 0$.  If
	$7\omega_M(\eta) \leq \gamma$ and equation~\eqref{eq:l1_sample}
	holds then with probability at least $1-\delta$ every source with
	$\Theta_j \geq \tau + \gamma$ is selected by the empirical rule
	$\widehat{\Theta}_j \geq \tau$ and every source with
	$\Theta_j \leq \tau - \gamma$ is rejected.
\end{corollary}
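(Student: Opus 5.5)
The plan is to reuse the uniform concentration event of Theorem~\ref{thm:sample}, exactly as in the proof of Theorem~\ref{thm:ranking}, and then compare each source's score with the threshold instead of with other sources.

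\textbf{Step 1: concentration event.} Under equation~\eqref{eq:l1_sample}, let $E$ be the event
\[
\max_{(i,j)\in\mathcal{Q}}|\widehat{\Delta}_{ij}-\Delta_{ij}|\le 7\,\omega_M(\eta).
\]
By Theorem~\ref{thm:sample}, $E$ has probability at least $1-\delta$. All remaining reasoning is deterministic on $E$.

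\textbf{Step 2: transfer to best-partner scores.} Both $\Theta_j$ and $\widehat{\Theta}_j$ are maxima over the same finite index set, namely the partners $q$ with $(j,q)\in\mathcal{Q}$ together with the constant $0$. The constant contributes no estimation error. The map $v\mapsto\max\{0,\max_q v_q\}$ is $1$-Lipschitz in the sup norm, since $|\max_q a_q-\max_q b_q|\le\max_q|a_q-b_q|$. Hence on $E$,
\[
\max_j|\widehat{\Theta}_j-\Theta_j|\le 7\,\omega_M(\eta)\le\gamma.
\]
One point needs checking: the empirical $\widehat{\Theta}_j$ must use the same candidate partner set $\mathcal{Q}$ as the population score. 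This holds because the set of scored pairs in Algorithm~\ref{alg:source} is fixed by the deterministic anchor choice before the data are seen. It also holds because Algorithm~\ref{alg:source} initialises $\widehat{\Theta}_j=0$, which matches the outer $\max\{0,\cdot\}$.

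\textbf{Step 3: threshold comparison.} Take $j$ with $\Theta_j\ge\tau+\gamma$. Then
\[
\widehat{\Theta}_j\ge\Theta_j-\gamma\ge\tau,
\]
so $j$ is selected. Take $j$ with $\Theta_j\le\tau-\gamma$. Then $\widehat{\Theta}_j\le\tau$.

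This rejection bound gives only a non-strict inequality $\widehat{\Theta}_j\le\tau$, which I expect to be the main obstacle. If the rule selects when $\widehat{\Theta}_j\ge\tau$, a source can land exactly on the threshold and still be selected. I would try the following fixes in order:
\begin{enumerate}
\item Read ``rejected'' as the strict rule $\widehat{\Theta}_j>\tau$ failing, which matches the non-strict bound.
\item Strengthen the hypothesis to $7\,\omega_M(\eta)<\gamma$, which gives $\widehat{\Theta}_j<\tau$ strictly.
\item Note that exact equality has probability zero only under continuity assumptions, which the discrete plug-in setting lacks. So fix 2 is the cleanest and costs nothing asymptotically.
\end{enumerate}

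No new probabilistic input is required, so the probability bound $1-\delta$ carries over directly from Theorem~\ref{thm:sample}.
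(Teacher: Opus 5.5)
Your argument is the one the paper relies on; it gives no separate proof of the corollary. The paper takes the sup-norm bound $\max_j|\widehat{\Theta}_j-\Theta_j|\le 7\,\omega_M(\eta)$ from the proof of Theorem~\ref{thm:ranking} (on the event of Theorem~\ref{thm:sample}) and compares directly with $\tau$. You are right that this only gives $\widehat{\Theta}_j\le\tau$ for the rejection half, so your fix~2 ($7\,\omega_M(\eta)<\gamma$) is the correct repair; fix~1 does not help under a single decision rule, because switching to the strict rule $\widehat{\Theta}_j>\tau$ moves the boundary case to the selection half, where $\Theta_j\ge\tau+\gamma$ only guarantees $\widehat{\Theta}_j\ge\tau$.
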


\begin{remark}[Operational Interpretation]
	The guarantee depends on the separation margin $\Gamma_s$ rather than
	on absolute estimation accuracy.  When interaction sources are well
	separated the selected set is stable even with imprecise individual
	estimates.  When $\Gamma_s$ is small the boundary is intrinsically
	ambiguous and bootstrap or permutation diagnostics are more informative
	than a single ranking.
\end{remark}

\subsection{Generalization Bound for Budgeted Models}
\label{sec:gen}

The following bound is conditional on a finite candidate dictionary
and isolates the cost of selecting at most $\Bmax$ components from it.

\begin{theorem}[Generalization with Feature Selection]
	\label{thm:generalization}
	Let $\mathcal{F}$ be the candidate dictionary with
	$|\mathcal{F}| = N$.  Consider linear classifiers supported on at
	most $\Bmax$ features from $\mathcal{F}$.  With probability at least
	$1-\delta$ over a sample of size~$n$ every such classifier~$h$
	satisfies
	\begin{equation}
		\label{eq:gen}
		R(h) \;\leq\; \hat{R}(h)
		\;+\; \sqrt{\frac{2(\Bmax+1)\log(en/(\Bmax+1))}{n}}
		\;+\; \sqrt{\frac{\log\!\bigl(2\binom{N}{\Bmax}/\delta\bigr)}{2n}}.
	\end{equation}
\end{theorem}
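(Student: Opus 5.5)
The plan is a union bound over supports: fix a support, bound uniform deviation of the 0--1 loss over linear classifiers on that support via its VC dimension, then pay $\log\binom{N}{\Bmax}$ for the choice of support. The two square-root terms in \eqref{eq:gen} match these two ingredients. The first is a growth-function (Sauer--Shelah) term for halfspaces in $\R^{\Bmax}$ with bias, whose VC dimension is $\Bmax+1$. The second is a Hoeffding-type term inflated by the number of supports.

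\emph{Step 1 (reduce to fixed supports).} Every classifier supported on at most $\Bmax$ features lies in a class $\mathcal{H}_S$ of linear threshold classifiers on some $S\subseteq\mathcal{F}$ with $|S|=\Bmax$. Padding with zero coefficients handles smaller supports, assuming $\Bmax\le N$. Hence
\begin{equation*}
\mathcal{H}=\bigcup_{|S|=\Bmax}\mathcal{H}_S,
\end{equation*}
a union of at most $\binom{N}{\Bmax}$ classes. Each $\mathcal{H}_S$ has $\VC(\mathcal{H}_S)\le \Bmax+1$.

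\emph{Step 2 (per-support bound).} For fixed $S$, we need an inequality of the form
\begin{equation*}
\sup_{h\in\mathcal{H}_S}\bigl(R(h)-\hat R(h)\bigr)\le \mathbb{E}\Bigl[\sup_{h\in\mathcal{H}_S}(R-\hat R)\Bigr]+\sqrt{\frac{\log(1/\delta')}{2n}},
\end{equation*}
holding with probability $1-\delta'$. This follows from McDiarmid's inequality, since the supremum has bounded differences $1/n$. Symmetrization bounds the expectation by $2\,\mathfrak{R}_n(\mathcal{H}_S)$. Massart's finite-class lemma combined with Sauer--Shelah, $\Pi_{\mathcal{H}_S}(n)\le (en/d)^d$ with $d=\Bmax+1$, gives
\begin{equation*}
\mathfrak{R}_n\le\sqrt{\frac{2d\log(en/d)}{n}}.
\end{equation*}
Here there is a factor-of-2 discrepancy with \eqref{eq:gen}. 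I would try to use the one-sided Rademacher bound directly on the loss class, possibly defining $\mathfrak{R}_n$ without the factor 2 in the symmetrization, so that the constant matches the stated $\sqrt{2(\Bmax+1)\log(en/(\Bmax+1))/n}$. If the constants cannot be matched exactly, I would state the bound with the constant that the argument delivers.

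\emph{Step 3 (union bound).} Set $\delta'=\delta/\binom{N}{\Bmax}$ in Step 2 and take a union over the $\binom{N}{\Bmax}$ supports. This makes the deviation term $\sqrt{\log(\binom{N}{\Bmax}/\delta)/(2n)}$. The stated term has $2\binom{N}{\Bmax}$ in place of $\binom{N}{\Bmax}$, which is slack, so a one- or two-sided split of $\delta$ is consistent with it. Since every $h\in\mathcal{H}$ lies in some $\mathcal{H}_S$, the bound then holds simultaneously for all budgeted classifiers.

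The main obstacle is the constant bookkeeping in Step 2, namely matching $\sqrt{2d\log(en/d)/n}$ exactly. Conceptually, one must also be careful that the dictionary $\mathcal{F}$ is fixed independently of the sample, as the theorem's preamble stipulates. Otherwise the union bound over supports is invalid, because IAIML builds its patterns and pairs from the training fold. I would note this conditioning explicitly.
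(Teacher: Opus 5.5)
Your argument is the paper's argument. The paper's proof is exactly: VC dimension $\Bmax+1$ on each fixed support, the VC bound at failure probability $\delta/\binom{N}{\Bmax}$, and a union bound over supports. You are also right that the extra $2$ inside the logarithm is slack.

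The one loose end is the factor of $2$ in your Step~2. The fix you float, redefining $\mathfrak{R}_n$ so that symmetrization carries no factor $2$, is not legitimate. The correct fix is as follows.

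Symmetrization yields $2\mathfrak{R}_n$ of the \emph{loss} class. With predictions and labels in $\{-1,+1\}$, one has $\mathbb{1}[h(x)\neq y]=\tfrac12\bigl(1-y\,h(x)\bigr)$. The constant term does not depend on $h$, so it leaves the supremum and has zero mean. Moreover, $-\sigma_i y_i$ is again a Rademacher sequence. Together these give $\mathfrak{R}_n(\ell\circ\mathcal{H}_S)=\tfrac12\,\mathfrak{R}_n(\mathcal{H}_S)$.

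Now apply Massart's lemma to the realized sign vectors $(h(x_1),\dots,h(x_n))\in\{-1,+1\}^n$. These have Euclidean norm $\sqrt{n}$, and there are at most $(en/d)^d$ of them for $n\ge d$. This gives $\mathfrak{R}_n(\mathcal{H}_S)\le\sqrt{2d\log(en/d)/n}$. Hence $2\mathfrak{R}_n(\ell\circ\mathcal{H}_S)\le\sqrt{2(\Bmax+1)\log(en/(\Bmax+1))/n}$, with exactly the stated constant. This is the standard VC corollary that the paper's one-line proof invokes.

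Equivalently, if you apply Massart directly to the $\{0,1\}$-valued loss vectors, center them first. Shifting by $\tfrac12$ does not change the Rademacher average, and the shifted vectors have norm $\sqrt{n}/2$, which supplies the missing $\tfrac12$.

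Your caveat about the dictionary being built from the training fold is consistent with the paper. The paper states that the bound treats $N$ and $\Bmax$ as fixed and is conditional on a finite candidate dictionary.
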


\begin{proof}
	For any fixed subset of $\Bmax$ features the VC-dimension of linear
	classifiers in $\R^{\Bmax}$ is $\Bmax + 1$
	\citep{Vapnik1995}.  Applying the VC bound with failure probability
	$\delta/\binom{N}{\Bmax}$ to each subset and union-bounding gives
	equation~\eqref{eq:gen}.
\end{proof}

The term $\log\binom{N}{\Bmax}$ is the statistical price of
data-dependent feature selection.  Reducing $\Bmax$ simultaneously
reduces the number of explanation components and this combinatorial
penalty.

\subsection{Computational Complexity}
\label{sec:comp}

Let $D$ denote the number of discretized atoms after preprocessing.
The total training cost decomposes into four stages:
\begin{equation}
	\label{eq:complexity}
	T_{\mathrm{total}}
	\;=\;
	\underbrace{O(pn\log n)}_{\text{adaptive binning}}
	\;+\;
	\underbrace{T_{\mathrm{mine}}(n,D,K)}_{\text{pattern mining}}
	\;+\;
	\underbrace{O(nm_{\mathrm{pair}})}_{\text{interaction scoring}}
	\;+\;
	\underbrace{T_{\mathrm{LR}}(n,\Bmax)}_{\text{downstream classifier}}.
\end{equation}

Adaptive binning sorts each of the $p$ raw features once and evaluates a
fixed set of candidate bin counts, giving cost $O(pn\log n)$.  Pattern
mining at depth $L{=}1$ performs a single linear scan over $n$ examples
and $D$ atoms.  At $L{=}2$ the search space is nominally
$\binom{D}{2}$ atom pairs, but the top-$K$ budget serves as a dynamic
pruning threshold: the utility of the $K$-th best pattern found so far
prunes any candidate whose upper-bound utility falls below it, so the
number of pairs actually evaluated is $C_2 \leq \binom{D}{2}$, with
$C_2$ decreasing as $K$ or $G$ tightens.  At higher depths the same
upper-bound pruning controls the combinatorial growth, making $K$ the
primary lever for bounding mining cost.  Each interaction pair requires
one $O(n)$ contingency-table pass, giving interaction cost
$O(nm_{\mathrm{pair}})$ with $m_{\mathrm{pair}} = \Theta(p^2)$ under
exhaustive scoring and $m_{\mathrm{pair}} = O(pr)$ under a partner
budget of size~$r$.  The downstream logistic-regression cost
$T_{\mathrm{LR}}(n,\Bmax)$ is governed by repeated passes over the
$n \times \Bmax$ design matrix.

\FloatBarrier
\section{Experimental Results}
\label{sec:experiments}

Our empirical study evaluates IAIML against tuned gradient-boosted tree ensembles and two interpretable interaction-aware baselines. The dataset panel contains 40 binary classification tasks: 24 real-world tabular benchmarks spanning credit risk, marketing response, health, and pricing domains (full list and sizes in Appendix~\ref{app:datasets}, Table~\ref{tab:validation_datasets}), and 16 synthetic interaction stress tests designed to exhibit specific pairwise and higher-order interaction structures (XOR, correlated-masked pairs, high-dimensional buried pairs, categorical--continuous conjunctions, threshold steps, parity groups, and other controlled regimes).

Six models are compared. {\IAA{}} and {\IAR{}} instantiate the two routing pathways of Section~\ref{sec:interaction_paths}. {XGBoost} and {LightGBM} are tuned gradient-boosted tree ensembles serving as strong predictive baselines. {EBM} (Explainable Boosting Machine) and {RuleFit} are interpretable interaction-aware baselines representative of the GAM-with-interactions and tree-derived rule-ensemble families.

Hyperparameters are selected via a \emph{fully nested} cross-validation protocol. For each dataset, an outer stratified 5-fold split provides the train/test partitions used for performance estimation; within each outer training fold, an inner stratified 3-fold cross-validation loop (over the grids detailed in Appendix~\ref{app:hyperparams}) selects hyperparameters independently, and the selected configuration is refit on the full outer training fold and scored on the held-out outer test fold. This nested design ensures that no information from a test fold influences the hyperparameter choice used to score it.

\textbf{Complexity metric.} We measure the number of explanation components using an audit-load proxy: the number of atomic fitted units that a reviewer would need to inspect to characterize the fitted model at implementation level. For IAIML, these are selected mined patterns, original features, and augmented pair terms; for XGBoost and LightGBM, realized terminal leaves; for EBM, active lookup-table cells; and for RuleFit, nonzero rule or linear terms. These units are not equally expressive across families: an IAIML pattern of length $L=2$ encodes a conjunction of two discretized conditions with one coefficient, while a boosted ensemble's feature effects are scattered across many leaves, and an EBM lookup-table cell provides fine-grained nonlinear resolution at the cost of more entries. The proxy is therefore an approximation of audit load; description-length measures or human-subjects time-to-decision studies would refine the comparison \citep{Rudin2019}. Despite this caveat, component count provides a useful and reproducible order-of-magnitude comparison across families.

Primary performance metrics are AUC, F1, and accuracy, averaged over the outer folds. Statistical comparison uses per-dataset mean AUC, followed by Wilcoxon signed-rank tests with Holm correction across the five pairwise comparisons against \IAA{}, and a Friedman test across all six models with mean-rank reporting.

\subsection{Main Results}
\label{sec:main_results}

\begin{table}[t]
\centering
\caption{Nested cross-validation results, sorted by mean AUC within each block. The overall block averages over the full 40-dataset panel; the real-world and synthetic blocks repeat the same statistics within each of the panel's two components (24 and 16 datasets respectively). AUC, F1, accuracy, and complexity are averaged over all outer folds.}
\label{tab:main_validation}
\small
\resizebox{\textwidth}{!}{%
\begin{tabular}{lrrrrrr}
\toprule
\textbf{Model} & \textbf{Mean AUC} & \textbf{Std.\ AUC} & \textbf{Median AUC} & \textbf{Mean F1} & \textbf{Mean Acc.} & \textbf{Mean Complexity} \\
\midrule
\multicolumn{7}{l}{\textit{Overall (40 datasets)}} \\
EBM & 0.9188 & 0.0987 & 0.9516 & 0.8162 & 0.8835 & 3{,}273 \\
LightGBM & 0.9174 & 0.1014 & 0.9553 & 0.8215 & 0.8840 & 3{,}415 \\
XGBoost & 0.9168 & 0.1004 & 0.9485 & 0.8161 & 0.8813 & 1{,}708 \\
RuleFit & 0.9118 & 0.1010 & 0.9419 & 0.7739 & 0.8748 & 131 \\
\textbf{\IAR{}} & \textbf{0.9105} & 0.0971 & 0.9335 & 0.8037 & 0.8747 & \textbf{122} \\
\textbf{\IAA{}} & \textbf{0.9048} & 0.1041 & 0.9328 & 0.7898 & 0.8688 & \textbf{120} \\
\midrule
\multicolumn{7}{l}{\textit{Real-world (24 datasets)}} \\
LightGBM & 0.9094 & 0.0860 & 0.9354 & 0.7796 & 0.8785 & 3{,}051 \\
EBM & 0.9080 & 0.0805 & 0.9258 & 0.7675 & 0.8741 & 2{,}278 \\
XGBoost & 0.9079 & 0.0836 & 0.9340 & 0.7710 & 0.8739 & 1{,}674 \\
RuleFit & 0.9003 & 0.0848 & 0.9244 & 0.7304 & 0.8660 & 133 \\
\textbf{\IAR{}} & \textbf{0.9002} & 0.0809 & 0.9131 & 0.7577 & 0.8681 & \textbf{140} \\
\textbf{\IAA{}} & \textbf{0.8980} & 0.0819 & 0.9115 & 0.7546 & 0.8665 & \textbf{134} \\
\midrule
\multicolumn{7}{l}{\textit{Synthetic (16 datasets)}} \\
EBM & 0.9350 & 0.1221 & 0.9747 & 0.8893 & 0.8975 & 4{,}765 \\
XGBoost & 0.9301 & 0.1233 & 0.9710 & 0.8837 & 0.8924 & 1{,}760 \\
LightGBM & 0.9295 & 0.1231 & 0.9703 & 0.8843 & 0.8923 & 3{,}962 \\
RuleFit & 0.9289 & 0.1223 & 0.9676 & 0.8392 & 0.8879 & 127 \\
\textbf{\IAR{}} & \textbf{0.9259} & 0.1187 & 0.9529 & 0.8725 & 0.8846 & \textbf{94} \\
\textbf{\IAA{}} & \textbf{0.9151} & 0.1330 & 0.9611 & 0.8427 & 0.8724 & \textbf{99} \\
\bottomrule
\end{tabular}%
}
\end{table}

\begin{figure}[t]
\centering
\includegraphics[width=0.7\textwidth]{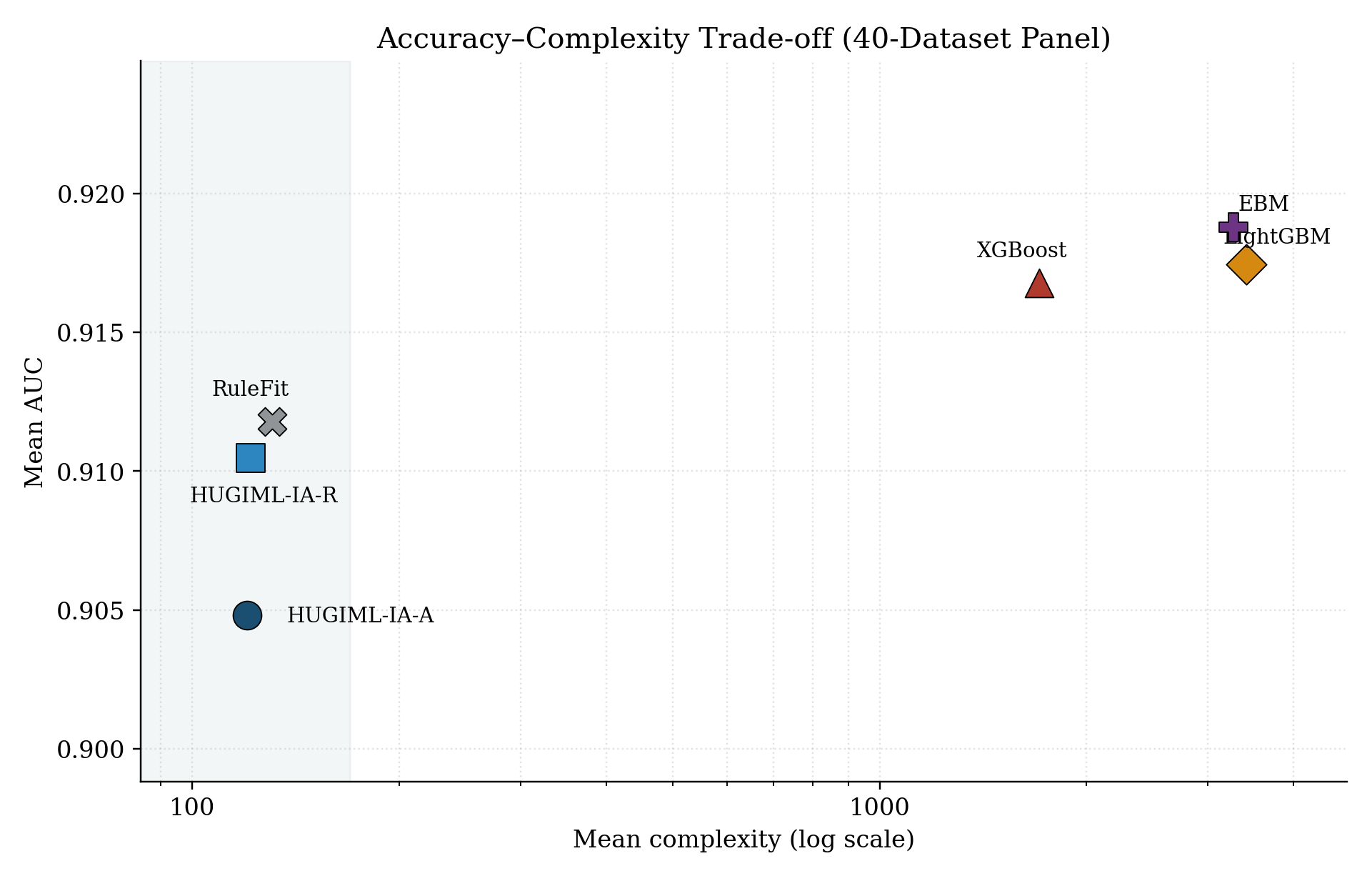}
\caption{Accuracy--complexity trade-off across the six evaluated model families (40-dataset panel). IAIML remains within 1.4 AUC points of the strong predictive baselines at 14--28$\times$ fewer fitted explanation units, while occupying the same compact-complexity tier as RuleFit.}
\label{fig:frontier}
\end{figure}

Table~\ref{tab:main_validation} and Figure~\ref{fig:frontier} summarize the headline result. EBM reaches the highest mean AUC (0.9188), followed by LightGBM (0.9174), XGBoost (0.9168), RuleFit (0.9118), \IAR{} (0.9105), and \IAA{} (0.9048): a span of 1.40 AUC points separates the best and worst mean performers on this panel. A Friedman test across all six models rejects the null hypothesis of equal performance ($\chi^2 = 30.56$, $p < 0.001$, $N=40$ datasets), with mean ranks of 2.75 (EBM), 2.80 (LightGBM), 3.20 (XGBoost), 3.85 (RuleFit), 3.85 (\IAA{}), and 4.55 (\IAR{}); lower rank indicates better relative performance.

This overall ordering is not stable across the two halves of the panel, as shown by the real-world and synthetic blocks in Table~\ref{tab:main_validation}. On the 24 real-world datasets, LightGBM narrowly leads (0.9094) over EBM and XGBoost, which are themselves separated by only 0.0001 AUC (0.9080 vs.\ 0.9079), and the three compact methods, RuleFit and both IAIML pathways, cluster within 0.0023 AUC of one another at the bottom of the ordering (0.8980--0.9003). On the 16 synthetic datasets the same six methods spread out considerably further: EBM opens a clearer lead (0.9350) over the tree ensembles (0.9301, 0.9295), while \IAA{} falls to 0.9151, 1.4 points behind RuleFit's 0.9289, a much wider gap than the even spacing seen on real-world data. This asymmetry follows directly from panel composition rather than from any real-world/synthetic distinction as such: the synthetic block was built to include stress tests on both sides of IAIML's pairwise, algebraic-operator design boundary, so it stretches the ordering in both directions at once, whereas the real-world block reflects a narrower band of naturally occurring, mostly diffuse dependency structure that gives no single interaction-handling mechanism as much room to separate from the rest. Complexity shows a parallel asymmetry: EBM's mean footprint nearly doubles from the real-world to the synthetic block (2{,}278 to 4{,}765 components) as its lookup tables expand to cover sharper synthetic decision boundaries, while both IAIML pathways and RuleFit stay within a narrow 94--140-component band in both blocks.

\begin{table}[t]
\centering
\caption{Holm-corrected pairwise Wilcoxon signed-rank tests, \IAA{} as reference, on per-dataset mean AUC ($N=40$).}
\label{tab:pairwise}
\small
\begin{tabular}{lrrrrrr}
\toprule
\textbf{Comparison} & \textbf{Mean $\Delta$AUC} & \textbf{Wins} & \textbf{Losses} & \textbf{Ties} & \textbf{Wilcoxon $p$} & \textbf{Holm $p$} \\
\midrule
vs.\ \IAR{} & $-0.0057$ & 17 & 14 & 9 & 0.652 & 0.652 \\
vs.\ XGBoost & $-0.0120$ & 16 & 23 & 1 & 0.072 & 0.216 \\
vs.\ LightGBM & $-0.0126$ & 13 & 26 & 1 & 0.033 & 0.131 \\
vs.\ EBM & $-0.0140$ & 9 & 29 & 2 & 0.001 & \textbf{0.006} \\
vs.\ RuleFit & $-0.0070$ & 17 & 22 & 1 & 0.302 & 0.604 \\
\bottomrule
\end{tabular}
\end{table}

Table~\ref{tab:pairwise} sharpens this picture. Under the reference-based Holm correction, \IAA{} shows no significant AUC difference from XGBoost ($p=0.216$), LightGBM ($p=0.131$), \IAR{} ($p=0.652$), or RuleFit ($p=0.604$), while EBM retains a significant edge ($p=0.006$). The main result is a compactness-versus-performance trade-off rather than an outright accuracy claim: EBM, LightGBM, and XGBoost have the highest mean AUCs, while the two IAIML variants trail by 0.6--1.4 AUC points on average, attaining this range with 14--28 times fewer fitted explanation components.

\subsubsection{Compactness and the RuleFit Comparison}

The compactness result is strongest relative to the two tree ensembles and EBM: \IAA{} uses 120 selected patterns/features on average, versus 1{,}708 realized leaves for XGBoost ($14.2\times$ more), 3{,}415 for LightGBM ($28.5\times$ more), and 3{,}273 lookup-table entries for EBM ($27.3\times$ more); \IAR{}, at 122 components, is similarly compact. Against RuleFit, whose fitted models retain 131 active rule and linear terms on average, the three compact families occupy the same tier: both IAIML variants and RuleFit lie in the 120--131 component range, and the pairwise AUC gap between \IAA{} and RuleFit is not statistically significant (Table~\ref{tab:pairwise}). Section~\ref{sec:comp_cost} shows that hyperparameter-tuning cost further separates IAIML from RuleFit.

This component-count parity compares counts of auditable units, not the complexity of each unit, and on the latter axis RuleFit's units are individually more complex. RuleFit's underlying trees are grown with \texttt{tree\_size}~$\in\{5,10\}$ (Appendix Table~\ref{tab:hyper_baseline_validation}), a parameter that fixes the number of terminal nodes per tree rather than a depth; reaching $M$ terminal nodes requires depth at least $\lceil\log_2 M\rceil$ in the balanced case, so trees with 5 or 10 leaves typically yield rules spanning three to four antecedent conditions. IAIML patterns are drawn from a grid capped at $L\leq 2$ for the main comparison (Appendix Table~\ref{tab:hyper_iaiml_validation}), and the inner-CV loop selects the longer setting, $L=2$, in only a slight majority of outer-training folds for both pathways (57\% for \IAA{}, 54\% for \IAR{}; Appendix Table~\ref{tab:hyperfreq}). A RuleFit rule therefore conjoins, on average, more original-feature conditions than an IAIML pattern, even though the two families report similar total component counts in Table~\ref{tab:main_validation}. The audit-load proxy used throughout this paper counts components rather than conditions per component, so it understates this difference: a reviewer auditing a RuleFit model is typically reading longer individual conditions than one auditing an IAIML model of comparable nominal size.

\subsection{Computational Cost}
\label{sec:comp_cost}

Compactness of the fitted model is only part of the practical cost of using an interpretable method; the other part is how expensive the model is to tune. Table~\ref{tab:runtime_main} reports the mean wall-clock cost of the full inner-CV hyperparameter search per outer fold, alongside the mean refit time of the selected configuration.

\begin{table}[t]
\centering
\caption{Mean fit and hyperparameter-tuning time per model, averaged over all outer folds and datasets, sorted by tuning time.}
\label{tab:runtime_main}\setlength{\tabcolsep}{5pt}
\small
\begin{tabular}{lrr}
\toprule
\textbf{Model} & \textbf{Mean fit time (s)} & \textbf{Mean tune time (s)} \\
\midrule
\IAR{} & 0.52 & 7.27 \\
XGBoost & 0.45 & 7.75 \\
LightGBM & 0.39 & 8.01 \\
\IAA{} & 1.31 & 16.56 \\
EBM & 2.26 & 21.10 \\
RuleFit & 15.38 & 126.41 \\
\bottomrule
\end{tabular}
\end{table}

\IAR{} is the fastest model to tune (7.3~seconds, on a log scale essentially tied with XGBoost's 7.8~seconds and LightGBM's 8.0~seconds); \IAA{}'s tuning time (16.6~seconds) is comparable to EBM's (21.1~seconds). RuleFit is a clear outlier: its mean tuning time, 126.4~seconds, is roughly $6\times$ EBM's and $17\times$ \IAR{}'s, and its mean fit time (15.4~seconds) is likewise an order of magnitude above every other model's (0.4--2.3~seconds). This cost stems from RuleFit's need to grow an auxiliary tree ensemble before rule extraction and lasso fitting, repeated across every hyperparameter configuration in the inner-CV grid. While RuleFit and IAIML occupy the same compact-complexity tier in Figure~\ref{fig:frontier}, they do not occupy the same practical-cost tier under the evaluated implementation and grids.

\subsection{Robustness Across Datasets}
\label{sec:robustness}

\begin{figure}[t]
\centering
\includegraphics[width=0.9\textwidth]{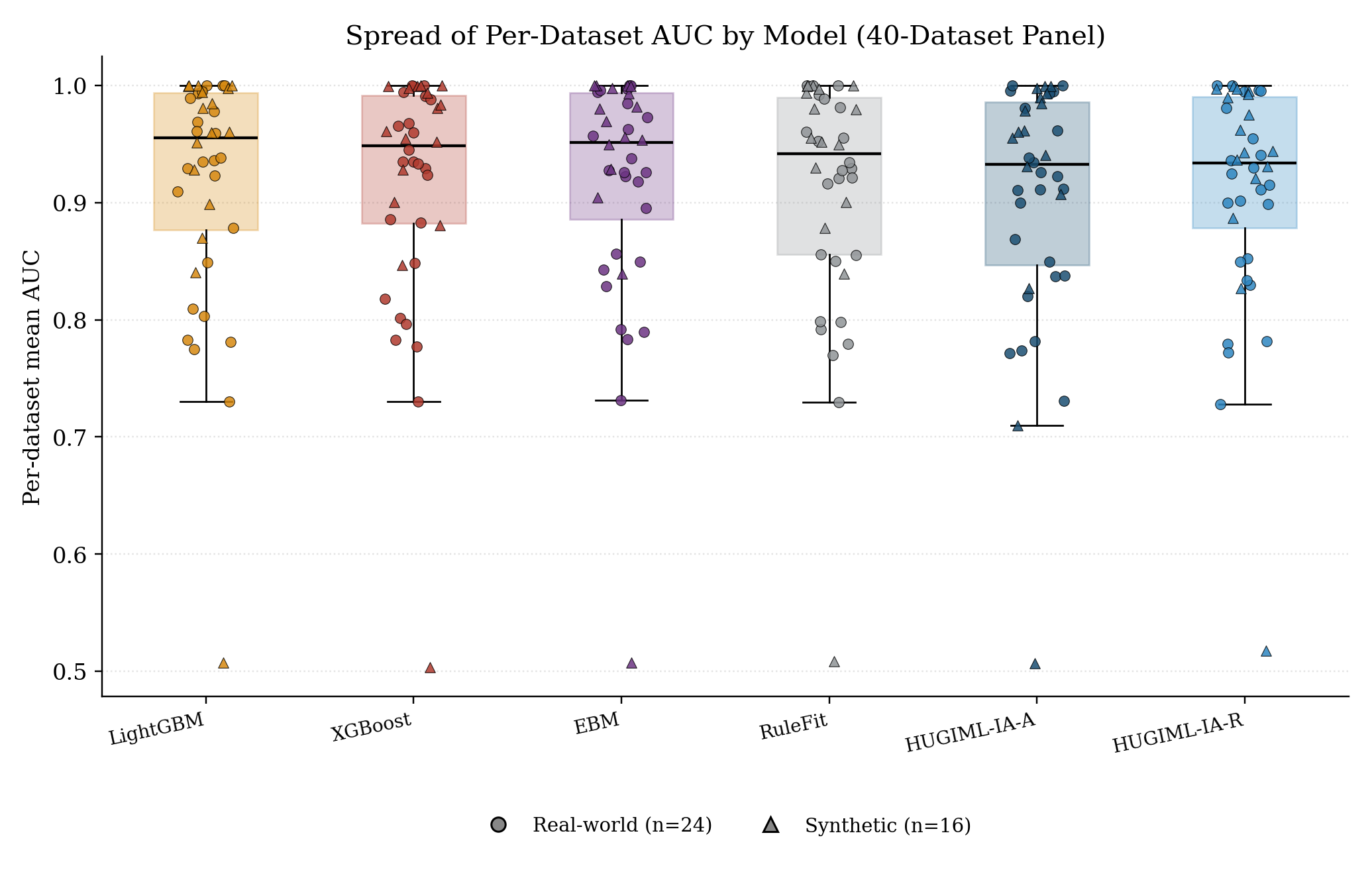}
\caption{Distribution of per-dataset mean AUC for each model across the 40-dataset panel (box plot with individual dataset points, real-world circles and synthetic triangles). The six interquartile ranges overlap substantially; differences between models are small relative to dataset-to-dataset variation.}
\label{fig:distribution}
\end{figure}

Table~\ref{tab:main_validation}'s Std.\ AUC column shows that the six models have similar cross-dataset standard deviations (0.097--0.104), and Figure~\ref{fig:distribution} confirms that their per-dataset AUC distributions overlap substantially. The low-AUC outliers, in the range 0.50--0.80, include both hard real-world datasets (\texttt{bank\_marketing}, \texttt{compas\_recidivism}, \texttt{default\_credit\_card}) and the deliberately intractable \texttt{SynthParityGroups}. The mean-AUC ordering in Table~\ref{tab:main_validation} is a real but fine-grained ranking superimposed on a much larger shared source of dataset-to-dataset variation.

\subsection{Performance by Dataset Group and Interaction Structure}
\label{sec:group}

\begin{figure}[t]
\centering
\includegraphics[width=0.75\textwidth]{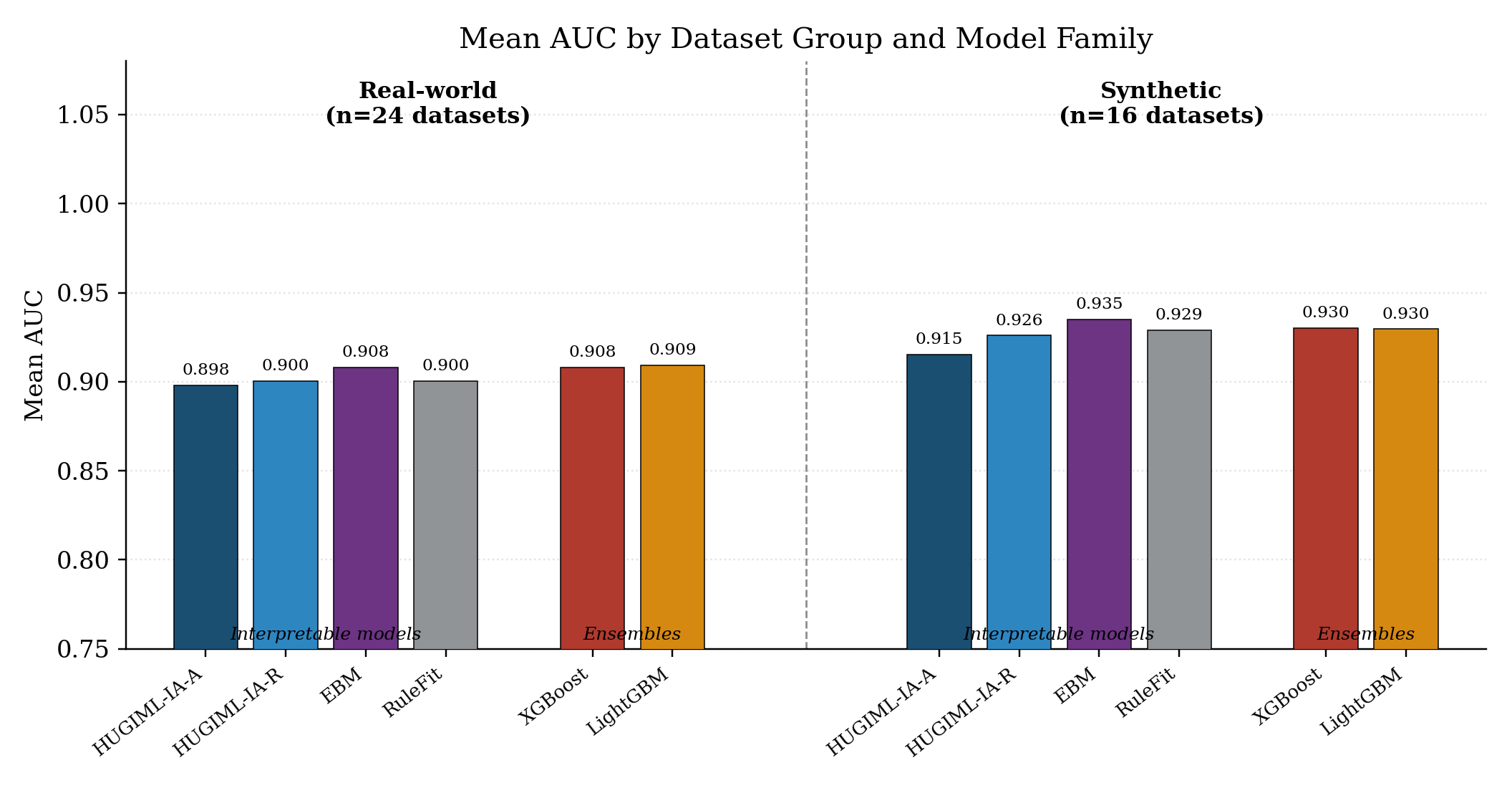}
\caption{Mean AUC by dataset group, with models clustered into two families for direct within-family comparison: interpretable methods (both IAIML pathways, EBM, RuleFit) and tree-ensemble baselines (XGBoost, LightGBM).}
\label{fig:group}
\end{figure}

Figure~\ref{fig:group} presents the same real-world/synthetic breakdown as Table~\ref{tab:main_validation}, but clusters models into the two families the comparison is organized around, so that alternate ensembles and alternate interpretable methods can each be read directly rather than interleaved by rank. Within the ensemble family, XGBoost and LightGBM are essentially interchangeable on both dataset groups (a 0.0015 AUC gap on real-world data, 0.0006 on synthetic), so the tree-ensemble baseline is not sensitive to which of the two implementations is chosen. Within the interpretable family the spread is wider and less stable: EBM leads and \IAA{} trails on both dataset groups, and the four-method spread widens from 0.0100 AUC on real-world data to 0.0199 on synthetic data, consistent with the operator-coverage limitation discussed next being more exposed on the synthetic panel.

The expanded 40-dataset panel includes several synthetic benchmarks that isolate specific interaction structures, enabling a fine-grained view of where IAIML's design provides distinctive value. The clearest gains appear on datasets built around low-marginal pairwise structure (refer to Appendix~\ref{app:perdatasetAUC}), the regime IAIML's interaction criterion directly targets. On \texttt{synth\_high\_dim\_buried\_pairs}, where two pairwise interactions are embedded among 100 features, \IAA{} reaches AUC 0.960, outperforming XGBoost (0.881), LightGBM (0.870), and RuleFit (0.878), and narrowly exceeding EBM (0.956), all at 100 components versus 2{,}971--8{,}886 for the baselines. On \texttt{synth\_multi\_pairwise}, where the label depends on three independent pairwise products with zero marginal signal, \IAA{} reaches 0.978 versus 0.970 for EBM and 0.961 for XGBoost, and on \texttt{synth\_correlated\_masked}, where the true interaction pair is obscured by correlated proxy features, \IAA{} again leads (0.907 vs.\ 0.904 for EBM). These results confirm that IAIML's marginal-signal-independent interaction scoring provides measurable gains precisely where its core design assumption, that important feature pairs carry negligible marginal signal, is satisfied. The same benefit extends to several small, interaction-rich real-world datasets: on \texttt{hepatitis} ($n=155$), \texttt{heart\_disease\_cleveland} ($n=303$), and \texttt{pima\_diabetes} ($n=768$), the best IAIML variant exceeds all baselines by a modest but consistent 0.5--1.0 AUC points, helped both by the compact model's lower variance at small sample sizes and by interaction-aware admission recovering clinically plausible feature combinations such as bilirubin--albumin, age--cholesterol, and glucose--insulin pairs.

The panel also contains two structures that fall outside this design envelope. \texttt{SynthParityGroups} encodes a parity function across two groups of three features, a genuinely higher-order interaction that no pairwise criterion can recover; all six models perform near chance on it (AUC 0.50--0.52). \texttt{SynthModularPairwise} is more diagnostic: \IAA{} drops to AUC 0.710 while \IAR{} recovers to 0.997, a divergence that traces to a specific failure of the algebraic pair-transform family (product, difference, sum) on a modular-arithmetic boundary that none of these four operators linearizes. Because the relaxed-mining pathway preserves the full discretized pattern vocabulary rather than committing to named algebraic operators, it recovers the signal that \IAA{}'s operators miss on this dataset.

\subsection{Ablation Studies}
\label{sec:ablation}

\begin{figure}[b]
\centering
\includegraphics[width=0.9\textwidth]{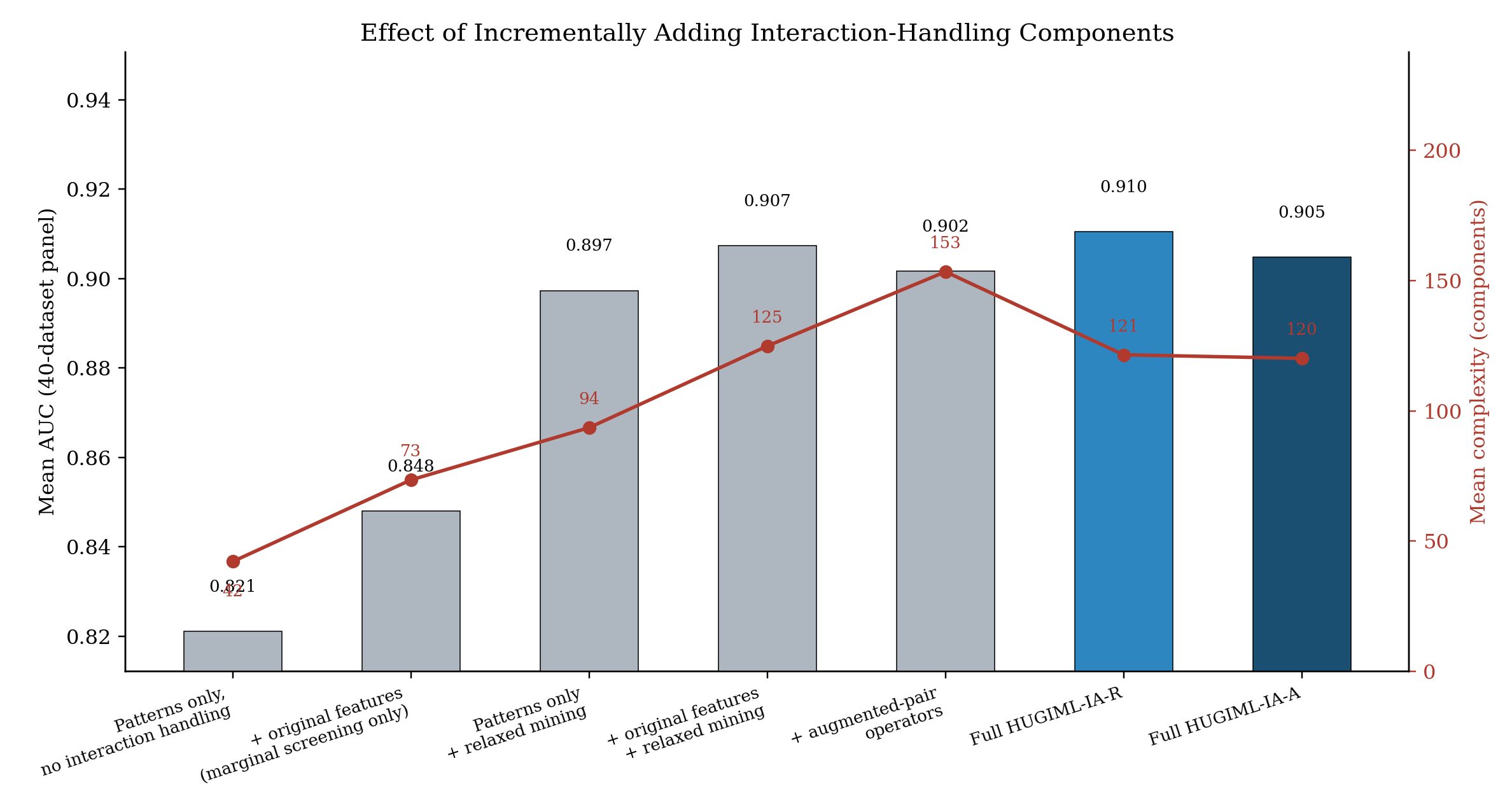}
\caption{Incremental effect of adding interaction-handling components (40-dataset panel).}
\label{fig:ablation}
\end{figure}

To isolate the contribution of each design element, we evaluate 19 component-isolated configurations on the full 40-dataset panel (Table~\ref{tab:ablation_full} in Appendix~\ref{app:ablation_full}). Figure~\ref{fig:ablation} traces the subset of this progression obtained by adding one interaction-handling mechanism at a time: a patterns-only configuration with no interaction handling reaches mean AUC 0.821 at 42 components; admitting original features under marginal screening alone lifts this to 0.848; adding relaxed mining to the patterns-only configuration reaches 0.897; combining original features with relaxed mining reaches 0.907; and adding augmented-pair operators instead reaches 0.902. The two rightmost bars show the fully tuned models from the main comparison, \IAR{} (0.9105) and \IAA{} (0.9048), each slightly above its matched ablation-harness configuration because the main comparison additionally tunes pattern length and the information-gain threshold via nested cross-validation (Appendix~\ref{app:hyperparams}) rather than fixing them at the ablation default. The net effect of the full interaction-handling framework, relative to the patterns-only baseline, is a gain of approximately 8.6 AUC points.

The remaining 14 configurations in Table~\ref{tab:ablation_full}, not shown in Figure~\ref{fig:ablation}, isolate two further points. First, extending relaxed mining to length-3 patterns (\texttt{RelaxedMining\_L3\_Survivors20}) reduces mean AUC to 0.849, well below the matched length-2 variant's 0.907, confirming that deeper pattern spaces dilute rather than strengthen the mined evidence. Second, replacing the interaction-information admission criterion with a marginal information-gain criterion (\texttt{AugPair\_L2\_MarginalIG\_Source20}) drops mean AUC to 0.853 from 0.902 for the matched interaction-information variant, a direct confirmation that the synergy-based selection criterion carries predictive value beyond what marginal screening alone recover.

\FloatBarrier
\section{Discussion}
\label{sec:discussion}

\citet{Rudin2019} has argued that the perceived trade-off between interpretability and accuracy is often unnecessary and that interpretable models can match opaque ones in many high-stakes settings. The results above provide additional evidence from the interaction-aware regime. IAIML does not outperform the strongest baselines in mean AUC across the full panel: EBM holds a significant edge and the tree ensembles hold non-significant but consistent advantages. Its value lies instead in the compactness of its explanation, closing much of the accuracy gap while using 14--28 times fewer fitted components than EBM, XGBoost, or LightGBM. RuleFit reaches a similar level of compactness with a small AUC edge over IAIML; the two families occupy the same compact-complexity tier but differ in explanation semantics, in typical rule or pattern complexity, and in tuning cost, as detailed in Section~\ref{sec:main_results}. IAIML's distinctive contribution within this tier is the explicit interaction-screening criterion and the ability to route interaction evidence into either a pure pattern vocabulary or a sparse algebraic-pair vocabulary, a choice the other compact methods do not offer. The clearest evidence for this design rationale comes from the datasets where pairwise interactions with low marginal signal carry the label signal: as observed in our experimental results, IAIML outperforms every baseline, including EBM, on exactly this subset, confirming that the marginal-signal-independent synergy criterion addresses a real gap in existing methods even though the gap does not dominate the average across a broad benchmark panel.

The three interaction-aware families evaluated here, IAIML, EBM, and RuleFit, locate interaction structure through different mechanisms, and the comparison is informative about each mechanism's blind spots rather than about a single winner. EBM's FAST algorithm \citep{Lou2013} ranks candidate pairs by residual improvement after main-effect fitting and attains the best mean rank on this panel (2.75) at a cost of 3{,}273 lookup-table entries. RuleFit captures interaction structure implicitly through tree splits and reaches mean AUC 0.9118 at 131 active terms with no significant AUC difference from \IAA{} (Holm $p=0.604$); as the compactness comparison notes, however, RuleFit's rules are individually more complex than IAIML's length-capped patterns, so the similar component counts understate IAIML's audit-load advantage. The ablation study isolates the value of IAIML's own criterion directly: replacing interaction-information admission with marginal information-gain admission costs approximately 5 AUC points, confirming that the synergy-based selection criterion carries value beyond marginal screening. Between IAIML's two routing pathways, \IAR{} slightly outperforms \IAA{} overall (0.9105 vs.\ 0.9048) with no significant difference between them (Holm $p=0.652$), a gap driven in part by \IAA{}'s algebraic operators failing on non-linearizable structures such as the modular-arithmetic boundary in \texttt{SynthModularPairwise}. Practitioners can therefore select the pathway that matches their governance requirements, rule-like patterns versus named algebraic terms, without expecting a systematic accuracy trade-off.

The complexity budget also provides a natural interface for model-risk governance: organizations can set $\Bmax$ to match audit capacity, ensuring that every retained pattern, feature, or pair term receives human review. The compactness advantage over tree ensembles and EBM, 14--28 times fewer fitted units under the audit-load proxy, is most relevant where regulatory scrutiny of individual predictions is common. Against RuleFit the comparison does not reduce to a size argument: \IAA{}, \IAR{}, and RuleFit (120, 122, and 131 mean components) occupy the same compact-complexity tier with no significant AUC differences among the three. The more informative differentiator is tuning economics: \IAR{}'s tuning time (7.3~seconds) is the fastest in the panel, roughly 17 times cheaper than RuleFit's (126.4~seconds), a gap that recurs every time the model is retuned. Practitioners choosing among the three compact interpretable families are therefore choosing primarily on governance fit and tuning economics rather than trading final model size against accuracy. Exhaustive pairwise interaction scoring scales as $O(p^2n)$; the partner-budget variant reduces this to $O(npr)$ and remains appropriate for the moderate-dimensional regulated tabular datasets that motivate this work.

IAIML is deliberately scoped to pairwise interaction structure: the interaction criterion, the guarantees in Section~\ref{sec:theory}, and the four augmented-pair operators are all built around two-way synergy, and extending the criterion to order-$k$ interactions would introduce $O(p^k)$ combinatorial cost together with unresolved definitional questions for multi-variable partial information decomposition \citep{Williams2010}. This boundary is visible empirically wherever the panel requires structure outside it. On \texttt{SynthParityGroups}, whose label depends on an inherently higher-order parity function, every model in the comparison performs near chance, not IAIML alone. On \texttt{SynthXORInteractions} and \texttt{SynthModularPairwise}, the four fixed algebraic operators, product, absolute difference, signed difference, and sum, do not linearize the underlying interaction: \IAA{} and \IAR{} reach only 0.941 and 0.943 AUC on the former, slightly behind all four baselines (0.949--0.952), and \IAA{} collapses to 0.710 on the latter, though \IAR{}'s discretized pattern vocabulary recovers most of the signal the algebraic operators miss on both. A richer or learnable operator family is the natural way to narrow this specific gap without abandoning the pairwise scope.

Both pattern mining and interaction estimation operate on discretized features; adaptive binning mitigates but does not eliminate the resulting information loss, and continuous mutual-information estimators \citep{Kraskov2004} could reduce it further at higher computational cost. 
\FloatBarrier
\section{Conclusion}
\label{sec:conclusion}

This paper introduced IAIML, an interaction-aware pattern classification for tabular binary classification. The framework addresses a specific limitation of marginal pattern-screening frameworks by scoring coarse-grid pairwise interaction evidence before final candidate selection, then routing that evidence through either interaction-relaxed mining (\IAR{}) or augmented-pair operators (\IAA{}). A partitioned component budget keeps the final sparse classifier within a bounded explanation footprint.

On a 40-dataset panel (24 real-world, 16 synthetic), IAIML achieves mean AUC within 1.4 points of the strongest baselines while using 14--28 times fewer fitted explanation components. On datasets with strong pairwise interaction structure and low marginal signal, IAIML outperforms all baselines, confirming that the marginal-signal-independent synergy criterion addresses a genuine gap. Performance degrades on datasets requiring higher-order interactions or rich diffuse nonlinearity, delineating the method's scope. These findings support interaction-aware pattern classification as a practical option when bounded explanation size and controlled treatment of feature interactions are central requirements.

\bibliographystyle{apalike}
\bibliography{references}

\FloatBarrier
\begin{appendix}

\section{Notation Table}
\label{app:notation}

\begin{table}[htbp]
\centering
\caption{Summary of principal notation used.}
\label{tab:notation}
\footnotesize
\begin{tabular}{ll}
\toprule
\textbf{Symbol} & \textbf{Description} \\
\midrule
$n$ & Number of instances (rows) in the dataset \\
$p$ & Number of predictors (features) \\
$\mathcal{D}$, $\Dtr$, $\Dtst$ & Full, training, and test datasets \\
$f_j$ & The $j$-th feature ($j = 1,\ldots,p$) \\
$y$ & Binary target label $\in \{0,1\}$ \\
$B$, $B_j$, $B_j^*$ & Global, per-feature, and selected bin count \\
$\IG(f_j)$ & Information gain of discretized feature $f_j$ w.r.t.\ $y$ \\
$\II(f_i, f_j)$ & Interaction information of feature pair $(f_i, f_j)$ \\
$I(X;Y)$ & Mutual information between $X$ and $Y$ \\
$\EU(f_j,c)$ & External utility of feature $f_j$ for class $c$ \\
$\Phi_{ij}$ & Pair transform family for features $(f_i, f_j)$ \\
$\mathcal{S}$ & Set of selected source--partner pairs \\
$\Bmax$ & Global complexity budget (max features in classifier) \\
$B_p, B_o, B_a$ & Budget allocations: patterns, originals, augmented \\
$K$ & Number of target classes (typically $K=2$) \\
$L$ & Maximum pattern length \\
$G$ & Information gain threshold for HUG patterns \\
$\rho$ & Elbow-stopping threshold for adaptive binning \\
$\tau$ & Interaction information threshold for source selection \\
$\alpha$ & Coarsen-by-default fraction for pair-aware binning \\
$R(h)$, $\hat{R}(h)$ & True risk and empirical risk of classifier $h$ \\
$d_{\VC}$ & VC-dimension \\
\IAA{}, \IAR{} & Augmented-pair and interaction-relaxed-mining IAIML variants \\
\bottomrule
\end{tabular}
\end{table}

\newpage 
\section{Dataset Panel}
\label{app:datasets}

Table~\ref{tab:validation_datasets} lists the dataset panel used in the empirical study: 24 real-world datasets spanning credit risk, marketing response, health, and pricing domains, and 16 synthetic interaction stress tests. 

\begin{table}[htbp]
\centering
\caption{Dataset panel used for the nested cross-validation empirical results. \textit{Pos.\ Rate} is the fraction of positive-class labels.}
\label{tab:validation_datasets}
\footnotesize
\begin{tabular}{rllrrr}
\toprule
\# & \textbf{Dataset} & \textbf{Group} & \textbf{Rows} & \textbf{Features} & \textbf{Pos.\ Rate} \\
\midrule
1 & adult\_income & Real-world & 48{,}842 & 14 & 0.239 \\
2 & australian\_credit & Real-world & 690 & 14 & 0.445 \\
3 & bank\_marketing & Real-world & 45{,}211 & 15 & 0.117 \\
4 & breast\_cancer\_wisconsin & Real-world & 569 & 30 & 0.373 \\
5 & california\_housing\_binarized & Real-world & 20{,}640 & 8 & 0.500 \\
6 & chess\_krvskp & Real-world & 3{,}196 & 36 & 0.522 \\
7 & compas\_recidivism & Real-world & 7{,}214 & 7 & 0.451 \\
8 & default\_credit\_card & Real-world & 30{,}000 & 23 & 0.221 \\
9 & fico\_heloc & Real-world & 9{,}871 & 23 & 0.480 \\
10 & german\_credit & Real-world & 1{,}000 & 20 & 0.300 \\
11 & heart\_disease\_cleveland & Real-world & 303 & 13 & 0.459 \\
12 & hepatitis & Real-world & 155 & 19 & 0.794 \\
13 & ionosphere & Real-world & 351 & 33 & 0.641 \\
14 & magic\_gamma & Real-world & 19{,}020 & 10 & 0.352 \\
15 & mushroom & Real-world & 8{,}124 & 21 & 0.482 \\
16 & nursery & Real-world & 12{,}960 & 8 & 0.667 \\
17 & online\_shoppers\_intention & Real-world & 12{,}330 & 17 & 0.155 \\
18 & phoneme & Real-world & 5{,}404 & 5 & 0.293 \\
19 & pima\_diabetes & Real-world & 768 & 8 & 0.349 \\
20 & spambase & Real-world & 4{,}601 & 57 & 0.394 \\
21 & telco\_churn & Real-world & 7{,}043 & 19 & 0.265 \\
22 & vehicle\_silhouettes & Real-world & 846 & 18 & 0.235 \\
23 & waveform\_v2 & Real-world & 5{,}000 & 40 & 0.338 \\
24 & wine\_quality\_binarized & Real-world & 6{,}497 & 11 & 0.197 \\
\midrule
25 & SynthAdditiveNonlinear & Synthetic & 10{,}000 & 10 & 0.500 \\
26 & SynthCategoricalRules & Synthetic & 10{,}000 & 12 & 0.349 \\
27 & SynthCirclesNonlinear & Synthetic & 10{,}000 & 10 & 0.500 \\
28 & SynthImbalancedRare & Synthetic & 10{,}000 & 10 & 0.123 \\
29 & SynthLinearLowDim & Synthetic & 10{,}000 & 5 & 0.498 \\
30 & SynthMixedMissing & Synthetic & 10{,}000 & 10 & 0.497 \\
31 & SynthModularPairwise & Synthetic & 9{,}999 & 10 & 0.333 \\
32 & SynthMoonsNonlinear & Synthetic & 10{,}000 & 10 & 0.500 \\
33 & SynthParityGroups & Synthetic & 10{,}000 & 12 & 0.504 \\
34 & SynthSparseWide & Synthetic & 10{,}000 & 50 & 0.501 \\
35 & SynthXORInteractions & Synthetic & 10{,}000 & 10 & 0.503 \\
36 & synth\_cat\_continuous\_interact & Synthetic & 10{,}000 & 10 & 0.193 \\
37 & synth\_correlated\_masked & Synthetic & 10{,}000 & 15 & 0.497 \\
38 & synth\_high\_dim\_buried\_pairs & Synthetic & 10{,}000 & 100 & 0.495 \\
39 & synth\_multi\_pairwise & Synthetic & 10{,}000 & 12 & 0.502 \\
40 & synth\_threshold\_steps & Synthetic & 10{,}000 & 8 & 0.467 \\
\bottomrule
\end{tabular}
\end{table}

\newpage 
\section{Per-Dataset AUC Results}
\label{app:perdatasetAUC}

\begin{table}[htbp]
	\centering
	\caption{Per-dataset mean AUC, nested cross-validation (40-dataset panel). Bold indicates the best AUC for each dataset (ties within 0.001 all bolded). IA-A/IA-R = IAIML-A/IAIML-R.}
	\label{tab:per_dataset_auc}
	\footnotesize
	\resizebox{\textwidth}{!}{\begin{tabular}{llrrrrrr}
			\toprule
			\textbf{Dataset} & \textbf{Grp} & \textbf{IA-A} & \textbf{IA-R} & \textbf{XGBoost} & \textbf{LightGBM} & \textbf{EBM} & \textbf{RuleFit} \\
			\midrule
			adult\_income & RW & 0.911 & 0.911 & \textbf{0.929} & \textbf{0.930} & 0.927 & 0.921 \\
			australian\_credit & RW & 0.935 & 0.930 & 0.935 & 0.936 & \textbf{0.938} & 0.930 \\
			bank\_marketing & RW & 0.774 & 0.780 & 0.802 & \textbf{0.803} & 0.792 & 0.792 \\
			breast\_cancer\_wisconsin & RW & \textbf{0.995} & \textbf{0.996} & 0.991 & 0.993 & 0.994 & 0.992 \\
			california\_housing\_binarized & RW & 0.938 & 0.941 & 0.965 & \textbf{0.969} & 0.957 & 0.952 \\
			chess\_krvskp & RW & 0.995 & 0.995 & \textbf{1.000} & \textbf{1.000} & 0.997 & \textbf{1.000} \\
			compas\_recidivism & RW & \textbf{0.731} & 0.728 & \textbf{0.730} & 0.730 & \textbf{0.731} & 0.730 \\
			default\_credit\_card & RW & 0.771 & 0.772 & \textbf{0.783} & \textbf{0.783} & \textbf{0.783} & 0.779 \\
			fico\_heloc & RW & 0.869 & 0.899 & \textbf{0.924} & \textbf{0.923} & 0.922 & 0.916 \\
			german\_credit & RW & 0.781 & 0.781 & 0.777 & 0.775 & \textbf{0.789} & 0.770 \\
			heart\_disease\_cleveland & RW & \textbf{0.900} & \textbf{0.900} & 0.883 & 0.879 & 0.895 & 0.855 \\
			hepatitis & RW & 0.820 & \textbf{0.852} & 0.796 & 0.781 & 0.843 & 0.798 \\
			ionosphere & RW & 0.926 & 0.936 & 0.968 & \textbf{0.978} & 0.973 & 0.955 \\
			magic\_gamma & RW & 0.912 & 0.902 & 0.935 & \textbf{0.938} & 0.918 & 0.921 \\
			mushroom & RW & \textbf{1.000} & \textbf{1.000} & \textbf{1.000} & \textbf{1.000} & \textbf{1.000} & \textbf{1.000} \\
			nursery & RW & \textbf{1.000} & \textbf{1.000} & \textbf{1.000} & \textbf{1.000} & \textbf{1.000} & \textbf{1.000} \\
			online\_shoppers\_intention & RW & 0.922 & 0.925 & 0.933 & \textbf{0.935} & 0.926 & 0.928 \\
			phoneme & RW & 0.911 & 0.915 & 0.945 & \textbf{0.961} & 0.926 & 0.934 \\
			pima\_diabetes & RW & \textbf{0.837} & 0.830 & 0.818 & 0.810 & 0.829 & 0.798 \\
			spambase & RW & 0.981 & 0.981 & 0.988 & \textbf{0.989} & 0.985 & 0.981 \\
			telco\_churn & RW & \textbf{0.849} & \textbf{0.849} & 0.848 & \textbf{0.849} & \textbf{0.850} & \textbf{0.850} \\
			vehicle\_silhouettes & RW & 0.993 & \textbf{0.995} & 0.994 & \textbf{0.995} & \textbf{0.996} & 0.989 \\
			waveform\_v2 & RW & 0.962 & 0.955 & 0.960 & 0.959 & \textbf{0.963} & 0.960 \\
			wine\_quality\_binarized & RW & 0.838 & 0.833 & 0.885 & \textbf{0.910} & 0.857 & 0.856 \\
			SynthAdditiveNonlinear & Syn & 0.962 & 0.962 & 0.981 & 0.981 & \textbf{0.982} & 0.980 \\
			SynthCategoricalRules & Syn & \textbf{1.000} & \textbf{0.999} & \textbf{1.000} & \textbf{1.000} & \textbf{1.000} & \textbf{1.000} \\
			SynthCirclesNonlinear & Syn & \textbf{1.000} & 0.996 & \textbf{1.000} & \textbf{1.000} & \textbf{1.000} & \textbf{0.999} \\
			SynthImbalancedRare & Syn & 0.955 & 0.944 & 0.955 & \textbf{0.960} & 0.954 & 0.952 \\
			SynthLinearLowDim & Syn & 0.993 & 0.993 & \textbf{0.994} & \textbf{0.994} & \textbf{0.993} & \textbf{0.994} \\
			SynthMixedMissing & Syn & \textbf{0.931} & \textbf{0.931} & 0.928 & 0.928 & 0.929 & 0.930 \\
			SynthModularPairwise & Syn & 0.710 & 0.997 & \textbf{0.999} & \textbf{1.000} & \textbf{1.000} & \textbf{1.000} \\
			SynthMoonsNonlinear & Syn & 0.990 & 0.990 & \textbf{0.998} & \textbf{0.998} & \textbf{0.998} & \textbf{0.997} \\
			SynthParityGroups & Syn & 0.507 & \textbf{0.518} & 0.503 & 0.507 & 0.507 & 0.508 \\
			SynthSparseWide & Syn & \textbf{0.985} & 0.975 & \textbf{0.984} & \textbf{0.985} & 0.980 & 0.980 \\
			SynthXORInteractions & Syn & 0.941 & 0.943 & \textbf{0.952} & \textbf{0.951} & 0.949 & 0.949 \\
			synth\_cat\_continuous\_interact & Syn & 0.998 & 0.997 & \textbf{1.000} & \textbf{1.000} & \textbf{1.000} & \textbf{1.000} \\
			synth\_correlated\_masked & Syn & \textbf{0.907} & 0.887 & 0.900 & 0.899 & 0.904 & 0.900 \\
			synth\_high\_dim\_buried\_pairs & Syn & \textbf{0.960} & 0.921 & 0.881 & 0.870 & 0.956 & 0.878 \\
			synth\_multi\_pairwise & Syn & \textbf{0.978} & 0.936 & 0.961 & 0.960 & 0.969 & 0.955 \\
			synth\_threshold\_steps & Syn & 0.827 & 0.827 & \textbf{0.847} & 0.840 & 0.839 & 0.840 \\
			\bottomrule
	\end{tabular}}
\end{table}

\newpage 
\section{Pairwise Model Comparisons}
\label{app:additional}

This appendix reports analyses that extend beyond the primary reference-based comparison in Section~\ref{sec:experiments}.

\subsection{Full Round-Robin Pairwise Comparison}
\label{app:roundrobin}

Table~\ref{tab:pairwise} in the main text follows the paper's reference-based design of testing every baseline against \IAA{} as a fixed reference, with Holm correction across those five comparisons. Table~\ref{tab:roundrobin} instead runs Wilcoxon signed-rank tests for \emph{all} $\binom{6}{2}=15$ pairs and applies Holm correction across all 15, a more conservative and more complete view of the statistical structure. Five of the fifteen comparisons reach significance under this stricter correction. Both IAIML pathways remain significantly behind EBM even under the joint correction (\IAA{} vs.\ EBM, $p=0.014$; \IAR{} vs.\ EBM, $p<0.001$), confirming that this gap is not an artifact of the reference-based comparison's narrower correction. RuleFit is significantly behind all three higher-mean-AUC baselines, XGBoost ($p=0.002$), LightGBM ($p=0.009$), and EBM ($p=0.033$), while showing no significant difference from either IAIML variant; together these place RuleFit at the bottom of the five-baseline ordering despite its compact rule count. No other pairwise comparison, including both comparisons among the tree ensembles and the comparison between the two IAIML pathways, reaches significance.

\begin{table}[htbp]
\centering
\caption{Full round-robin Wilcoxon signed-rank comparison across all $\binom{6}{2}=15$ model pairs, per-dataset mean AUC ($N=40$), Holm-corrected across all 15 comparisons jointly.}
\label{tab:roundrobin}
\footnotesize
\begin{tabular}{llrrrrrrl}
\toprule
\textbf{A} & \textbf{B} & \textbf{Mean $\Delta$(A$-$B)} & \textbf{Wins} & \textbf{Losses} & \textbf{Ties} & \textbf{Wilcoxon $p$} & \textbf{Holm $p$} & \textbf{Sig.} \\
\midrule
IAIML-A & IAIML-R & $-0.0057$ & 17 & 14 & 9 & 0.652 & 1.000 &  \\
IAIML-A & XGBoost & $-0.0120$ & 16 & 23 & 1 & 0.072 & 0.431 &  \\
IAIML-A & LightGBM & $-0.0126$ & 13 & 26 & 1 & 0.033 & 0.262 &  \\
IAIML-A & EBM & $-0.0140$ & 9 & 29 & 2 & 0.001 & 0.014 & \checkmark \\
IAIML-A & RuleFit & $-0.0070$ & 17 & 22 & 1 & 0.302 & 1.000 &  \\
IAIML-R & XGBoost & $-0.0063$ & 11 & 28 & 1 & 0.006 & 0.053 &  \\
IAIML-R & LightGBM & $-0.0070$ & 11 & 28 & 1 & 0.005 & 0.053 &  \\
IAIML-R & EBM & $-0.0083$ & 6 & 32 & 2 & 0.000 & $<$0.001 & \checkmark \\
IAIML-R & RuleFit & $-0.0013$ & 11 & 28 & 1 & 0.033 & 0.262 &  \\
XGBoost & LightGBM & $-0.0007$ & 15 & 24 & 1 & 0.241 & 1.000 &  \\
XGBoost & EBM & $-0.0020$ & 16 & 23 & 1 & 0.606 & 1.000 &  \\
XGBoost & RuleFit & $+0.0050$ & 27 & 12 & 1 & 0.000 & 0.002 & \checkmark \\
LightGBM & EBM & $-0.0013$ & 19 & 20 & 1 & 0.780 & 1.000 &  \\
LightGBM & RuleFit & $+0.0057$ & 28 & 10 & 2 & 0.001 & 0.009 & \checkmark \\
EBM & RuleFit & $+0.0070$ & 25 & 14 & 1 & 0.003 & 0.033 & \checkmark \\
\bottomrule
\end{tabular}
\end{table}

\subsection{Dataset-Group-Conditional Comparison}
\label{app:groupstats}

Table~\ref{tab:group_pairwise} repeats the reference-based test of Table~\ref{tab:pairwise} separately within the 24 real-world and 16 synthetic datasets, referenced from Section~\ref{sec:group}. On the real-world subset, \IAA{}'s gap to EBM reaches significance (Holm $p=0.005$); no other real-world comparison does. On the synthetic subset, none of the five comparisons reach significance, even though the raw mean AUC gaps are numerically larger there than on the real-world subset (e.g., $-0.0199$ vs.\ EBM, compared with $-0.0100$ on real-world data); this reflects the higher cross-dataset variance of the synthetic panel (Table~\ref{tab:main_validation}, Std.\ AUC column) rather than a smaller effect. The synthetic-subset results should be read as suggestive rather than conclusive; the real-world result corroborates the full-panel finding in Table~\ref{tab:pairwise}.

\begin{table}[htbp]
\centering
\caption{Reference-based Wilcoxon tests (\IAA{} vs.\ each baseline), computed separately within the real-world ($N=24$) and synthetic ($N=16$) dataset groups.}
\label{tab:group_pairwise}
\footnotesize
\begin{tabular}{lrrrrr}
\toprule
& \multicolumn{2}{c}{\textbf{Real-world ($N=24$)}} & & \multicolumn{2}{c}{\textbf{Synthetic ($N=16$)}} \\
\cmidrule(lr){2-3} \cmidrule(lr){5-6}
\textbf{Comparison} & \textbf{Mean $\Delta$AUC} & \textbf{Holm $p$} & & \textbf{Mean $\Delta$AUC} & \textbf{Holm $p$} \\
\midrule
vs.\ IAIML-R & $-0.0023$ & 0.723 & & $-0.0108$ & 0.789 \\
vs.\ XGBoost & $-0.0099$ & 0.221 & & $-0.0150$ & 1.000 \\
vs.\ LightGBM & $-0.0114$ & 0.221 & & $-0.0144$ & 1.000 \\
vs.\ EBM & $-0.0100$ & 0.005 & & $-0.0199$ & 1.000 \\
vs.\ RuleFit & $-0.0024$ & 0.723 & & $-0.0138$ & 1.000 \\
\bottomrule
\end{tabular}
\end{table}

\subsection{Per-Dataset Rank Distribution}
\label{app:rankdist}

Table~\ref{tab:main_validation} and Table~\ref{tab:pairwise} summarize model quality through panel-wide means and reference-based tests. Figure~\ref{fig:rank_heatmap} and Table~\ref{tab:rank_distribution} complement this with the full per-dataset AUC rank structure (1 = best AUC on that dataset), extending the mean-rank summary of Section~\ref{sec:main_results}.

\begin{figure}[htbp]
\centering
\includegraphics[width=\textwidth]{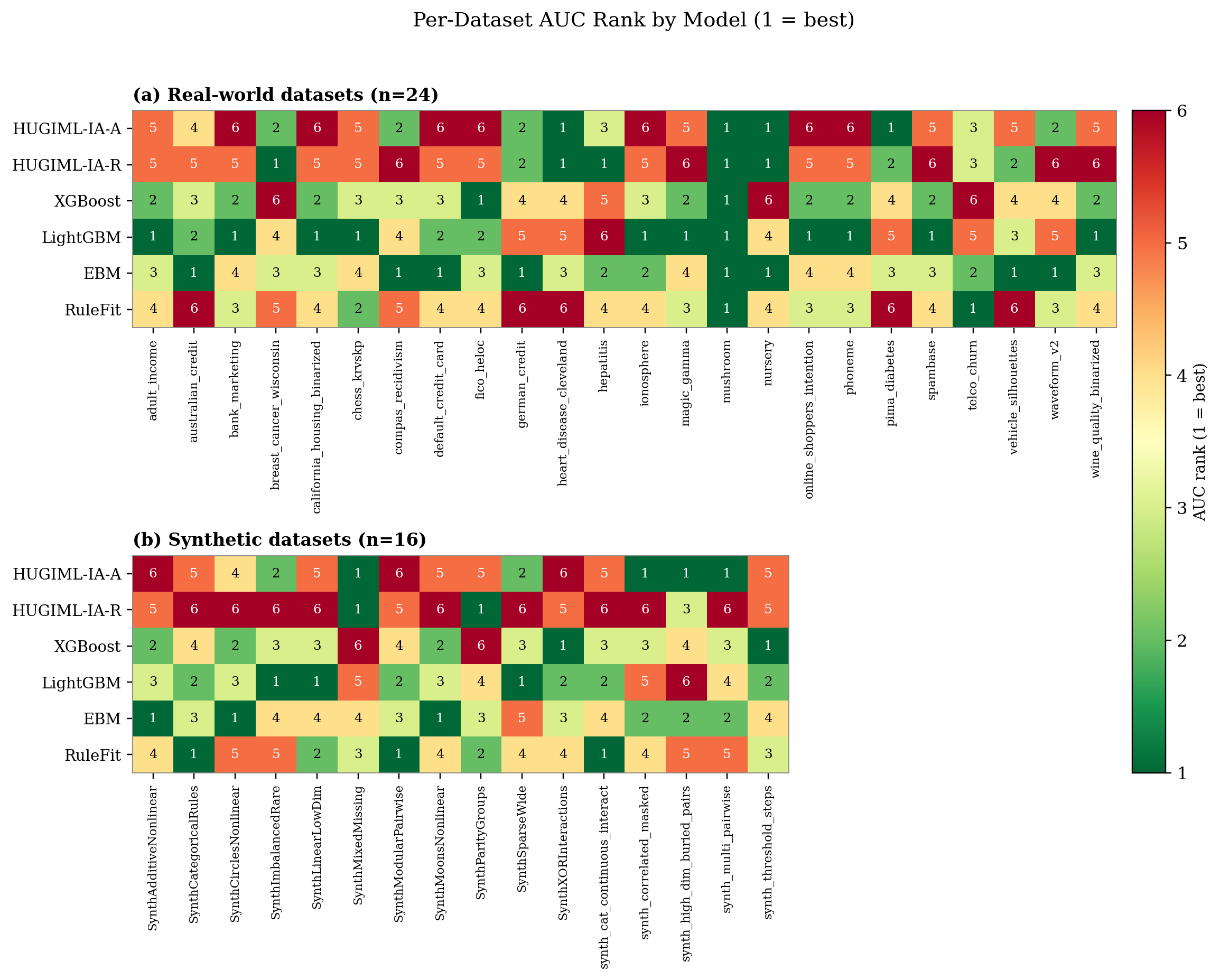}
\caption{Per-dataset AUC rank (1 = best) for all six models across the 40-dataset panel: (a) real-world datasets (n=24); (b) synthetic datasets (n=16). No model wins outright on every dataset in either group, and rank volatility is visibly higher on the real-world subset than on the synthetic subset.}
\label{fig:rank_heatmap}
\end{figure}

\begin{table}[htbp]
\centering
\caption{Number of datasets (out of 40) on which each model attains each AUC rank (1 = best). IA-A/IA-R = IAIML-A/IAIML-R.}
\label{tab:rank_distribution}
\footnotesize
\begin{tabular}{lrrrrrr}
\toprule
\textbf{Model} & \textbf{Rank 1} & \textbf{Rank 2} & \textbf{Rank 3} & \textbf{Rank 4} & \textbf{Rank 5} & \textbf{Rank 6} \\
\midrule
IA-A & 8 & 6 & 2 & 2 & 12 & 10 \\
IA-R & 7 & 3 & 2 & 0 & 14 & 14 \\
XGBoost & 4 & 11 & 11 & 8 & 1 & 5 \\
LightGBM & 14 & 8 & 4 & 5 & 7 & 2 \\
EBM & 11 & 6 & 12 & 10 & 1 & 0 \\
RuleFit & 5 & 3 & 7 & 14 & 6 & 5 \\
\bottomrule
\end{tabular}
\end{table}

LightGBM attains the most outright wins (14 of 40 datasets), followed by EBM (11) and \IAA{} (8); EBM concentrates most of its mass in ranks 1--3 (29 of 40, 72.5\%), consistent with its leading mean rank in Section~\ref{sec:main_results}. RuleFit's distribution is centered on rank 4 (14 of 40 datasets), reflecting consistent middle-of-the-pack performance. The two IAIML variants show different shapes: \IAR{} is sharply bimodal, combining 7 outright wins with a heavy concentration in the bottom two ranks (28 of 40, 70\%), while \IAA{} is somewhat more spread out, winning on 8 datasets but still finishing fifth or sixth on 22 of 40 (55\%). No single model wins on a majority of datasets, LightGBM's 14 of 40 is the largest share, reinforcing the observation in Section~\ref{sec:robustness} that dataset-to-dataset variation dominates the differences between models.

\subsection{Hyperparameter Selection Frequency}
\label{app:hyperfreq}

Table~\ref{tab:hyperfreq} reports how often each hyperparameter value was selected by the inner-CV loop, aggregated over the $40 \times 5 = 200$ outer-training-fold selections underlying Table~\ref{tab:main_validation}. The tree ensembles lean toward higher-capacity configurations (more estimators, deeper trees or more leaves), EBM leans toward finer binning and a higher learning rate, and RuleFit leans toward more rules and larger trees. The two IAIML variants diverge slightly from each other: \IAA{} is roughly balanced between pattern lengths $L=1$ and $L=2$ with a lean toward the stricter information-gain threshold ($G=0.001$), while \IAR{} is roughly balanced on both pattern length and threshold, consistent with relaxed mining's reliance on breadth of admitted candidates rather than a small set of high-information patterns. Mean fit and tuning times per model are reported in Table~\ref{tab:runtime_main} (Section~\ref{sec:comp_cost}).

\begin{table}[htbp]
\centering
\caption{Nested-CV inner-loop hyperparameter selection frequency, out of 200 outer-training-fold selections per model (40 datasets $\times$ 5 outer folds).}
\label{tab:hyperfreq}
\small
\begin{tabular}{lll}
\toprule
\textbf{Model} & \textbf{Parameter} & \textbf{Selection frequency} \\
\midrule
\IAA{} & $L$ & $2$: 57\%, \; $1$: 42\% \\
 & top-$K$ & $100$: 57\%, \; $200$: 42\% \\
 & $G$ & $0.001$: 50\%, \; $0.01$: 50\% \\
\IAR{} & $L$ & $2$: 54\%, \; $1$: 46\% \\
 & top-$K$ & $100$: 54\%, \; $200$: 46\% \\
 & $G$ & $0.01$: 62\%, \; $0.001$: 38\% \\
XGBoost & $n_{\mathrm{estimators}}$ & $200$: 68\%, \; $100$: 32\% \\
 & max depth & $4$: 61\%, \; $3$: 39\% \\
 & learning rate & $0.1$: 68\%, \; $0.03$: 32\% \\
LightGBM & $n_{\mathrm{estimators}}$ & $200$: 60\%, \; $100$: 40\% \\
 & num.\ leaves & $15$: 55\%, \; $31$: 46\% \\
 & learning rate & $0.1$: 58\%, \; $0.03$: 42\% \\
EBM & max bins & $64$: 57\%, \; $32$: 42\% \\
 & learning rate & $0.05$: 68\%, \; $0.01$: 32\% \\
RuleFit & max rules & $100$: 57\%, \; $200$: 42\% \\
 & tree size & $10$: 67\%, \; $5$: 33\% \\
\bottomrule
\end{tabular}
\end{table}

\newpage 
\section{Hyperparameter Configuration}
\label{app:hyperparams}

All models are tuned under a fully nested cross-validation protocol: an outer stratified 5-fold split provides train/test partitions for performance estimation, and within each outer training fold an inner stratified 3-fold loop selects hyperparameters from the grids below. The selected configuration is refit on the full outer training fold before scoring on the held-out outer test fold.

\begin{table}[htbp]
\centering
\caption{IAIML component grids used for nested model selection. Both pathways use $B=-1$ with adaptive binning, top-$K\in\{100,200\}$, $G\in\{0.01,0.001\}$, and original-plus-pattern mode.}
\label{tab:hyper_iaiml_validation}
\small
\begin{tabular}{llll}
\toprule
\textbf{Pathway} & \textbf{Pair operators} & \textbf{Relaxed mining} & \textbf{Additional values} \\
\midrule
\IAA{} & On & Off & $L\in\{1,2\}$; interaction-information mode \\
\IAR{} & Off & On & $L\in\{1,2\}$ \\
\bottomrule
\end{tabular}
\end{table}

\begin{table}[htbp]
\centering
\caption{Baseline model grids used for nested model selection.}
\label{tab:hyper_baseline_validation}
\small
\begin{tabular}{lll}
\toprule
\textbf{Model} & \textbf{Parameter} & \textbf{Values} \\
\midrule
XGBoost & $n_{\mathrm{estimators}}$ & $\{100,200\}$ \\
XGBoost & max depth & $\{3,4\}$ \\
XGBoost & learning rate & $\{0.03,0.1\}$ \\
\midrule
LightGBM & $n_{\mathrm{estimators}}$ & $\{100,200\}$ \\
LightGBM & learning rate & $\{0.03,0.1\}$ \\
LightGBM & num leaves & $\{15,31\}$ \\
\midrule
EBM & interactions & $\{5\}$ \\
EBM & max bins & $\{32,64\}$ \\
EBM & learning rate & $\{0.01,0.05\}$ \\
EBM & max rounds & $\{200\}$ \\
\midrule
RuleFit & $n_{\mathrm{estimators}}$ & $\{100\}$ \\
RuleFit & tree size & $\{5,10\}$ \\
RuleFit & max rules & $\{100,200\}$ \\
\bottomrule
\end{tabular}
\end{table}

\newpage 
\section{Full Ablation Results}
\label{app:ablation_full}

Table~\ref{tab:ablation_full} reports all 19 component-isolated ablation configurations evaluated on the 40-dataset panel described in Section~\ref{sec:ablation}, sorted by mean AUC. Configuration names encode the enabled mechanism (\texttt{Fixed}/\texttt{Adaptive} binning, \texttt{Plain}/\texttt{RelaxedMining}/\texttt{AugPair} interaction handling, \texttt{PatternsOnly} feature mode, pattern length \texttt{L1}/\texttt{L2}/\texttt{L3}, and source/survivor-set size or top-$K$ budget where applicable).

\begin{table}[htbp]
\centering
\caption{Full component-isolated ablation results, 40-dataset panel, sorted by mean AUC.}
\label{tab:ablation_full}
\footnotesize
{%
\small
\begin{tabular}{lrr}
\toprule
\textbf{Configuration} & \textbf{Mean AUC} & \textbf{Mean Complexity} \\
\midrule
RelaxedMining\_L2 & 0.9074 & 124.9 \\
RelaxedMining\_L2\_G0001\_Survivors20 & 0.9070 & 126.8 \\
RelaxedMining\_L2\_Survivors20 & 0.9068 & 124.7 \\
RelaxedMining\_L2\_Survivors30 & 0.9067 & 124.7 \\
RelaxedMining\_L2\_TopK200\_Survivors20 & 0.9061 & 217.6 \\
AugPair\_L2\_TopK200\_Source20 & 0.9027 & 235.1 \\
AugPair\_L2\_Source20 & 0.9023 & 153.4 \\
AugPair\_L2\_Source30 & 0.9021 & 153.4 \\
AugPair\_L2 & 0.9017 & 153.4 \\
AugPair\_L2\_StrictTopK200\_Source20 & 0.9012 & 175.1 \\
PatternsOnly\_Relaxed\_L2 & 0.8973 & 93.5 \\
AugPair\_L2\_MarginalIG\_Source20 & 0.8529 & 115.8 \\
Adaptive\_G0001\_L1 & 0.8500 & 67.3 \\
RelaxedMining\_L3\_Survivors20 & 0.8493 & 113.7 \\
Plain\_L2 & 0.8479 & 73.5 \\
Fixed\_B7\_L1 & 0.8458 & 49.0 \\
Adaptive\_topK200\_L1 & 0.8452 & 52.8 \\
Adaptive\_Bauto\_L1 & 0.8449 & 49.5 \\
PatternsOnly\_Plain\_L2 & 0.8211 & 42.1 \\
\bottomrule
\end{tabular}%
}
\end{table}

\end{appendix}
\end{document}